\documentclass{article}

\usepackage[preprint]{neurips_2026}

\usepackage[utf8]{inputenc} % allow utf-8 input
\usepackage[T1]{fontenc}    % use 8-bit T1 fonts
\usepackage{hyperref}       % hyperlinks
\usepackage{url}            % simple URL typesetting
\usepackage{booktabs}       % professional-quality tables
\usepackage{amsfonts}       % blackboard math symbols
\usepackage{nicefrac}       % compact symbols for 1/2, etc.
\usepackage{microtype}      % microtypography
\usepackage{xcolor}         % colors
\usepackage{amsthm}         % theorem environments

\usepackage{wrapfig}
\usepackage{inconsolata}
\usepackage{tcolorbox}

\usepackage{graphicx}
\usepackage{multirow}
\usepackage{xcolor} 
\usepackage{listings}
\usepackage{multicol} % for multiple columns
\usepackage{lipsum} % for dummy text
\usepackage{amsmath}
\usepackage{amssymb}
\usepackage{algorithm}
\usepackage{algorithmicx}
\usepackage{algpseudocode}
\usepackage{array}
\usepackage{booktabs}
\usepackage{arydshln}
\usepackage{caption}
\usepackage{mathtools}

\usepackage{xcolor}
\usepackage{bm}       
\usepackage{enumitem}
\usepackage{bbm} 
\usepackage{listings}

\newtheorem{theorem}{Theorem}
\newtheorem{corollary}[theorem]{Corollary}

\theoremstyle{definition}
\newtheorem{definition}[theorem]{Definition}
\newtheorem{assumption}[theorem]{Assumption}

\theoremstyle{remark}

\title{On Time, Within Budget: Constraint-Driven Online Resource Allocation for Agentic Workflows}

\begin{document}

\author {
    \textbf{Xinglin Wang}\textsuperscript{\rm 1}\footnotemark[1], \hspace{0cm}
    \textbf{Zishen Liu}\textsuperscript{\rm 1}\footnotemark[1], \hspace{0cm} 
    \textbf{Shaoxiong Feng}\textsuperscript{\rm 2}\footnotemark[2], \hspace{0cm}
    \textbf{Peiwen Yuan}\textsuperscript{\rm 1}, \hspace{0cm}
    \textbf{Yiwei Li}\textsuperscript{\rm 1}, \hspace{0cm}
    \\
    \textbf{Jiayi Shi}\textsuperscript{\rm 1}, \hspace{0cm}
    \textbf{Yueqi Zhang}\textsuperscript{\rm 1}, \hspace{0cm}
    \textbf{Chuyi Tan}\textsuperscript{\rm 1}, \hspace{0cm}
    \textbf{Ji Zhang}\textsuperscript{\rm 1}, \hspace{0cm}
    \textbf{Boyuan Pan}\textsuperscript{\rm 2}, \hspace{0cm} 
    \textbf{Yao Hu}\textsuperscript{\rm 2}, \hspace{0cm} 
    \textbf{Kan Li}\textsuperscript{\rm 1}\footnotemark[2] \\
    \textsuperscript{\rm 1} School of Computer Science, Beijing Institute of Technology \\
    \textsuperscript{\rm 2} Xiaohongshu Inc \\
    \texttt{\{wangxinglin,liuzishen,peiwenyuan,liyiwei\}@bit.edu.cn} \\
    \texttt{\{zhangyq,shijiayi,tanchuyi,likan\}@bit.edu.cn} \\
    \texttt{shaoxiongfeng2023@gmail.com} \quad \texttt{\{panboyuan,xiahou\}@xiaohongshu.com}
}

\renewcommand{\thefootnote}{\fnsymbol{footnote}}
\footnotetext[1]{Equal contribution.}
\footnotetext[2]{Corresponding author.}
\renewcommand{\thefootnote}{\arabic{footnote}}

\maketitle

\begin{abstract}

Agentic systems increasingly solve complex user requests by executing orchestrated workflows, where subtasks are assigned to specialized models or tools and coordinated according to their dependencies. 
While recent work improves agent efficiency by optimizing the performance--cost--latency frontier, real deployments often impose concrete requirements: a workflow must be completed within a specified budget and before a specified deadline.
This shifts the goal from average efficiency optimization to maximizing the probability that the entire workflow completes successfully under explicit budget and deadline constraints. 
We study \emph{constraint-driven online resource allocation for agentic workflows}. 
Given a dependency-structured workflow and estimates of success rates and generation lengths for each subtask--model pair, the executor dynamically allocates models and parallel samples across simultaneously executable subtasks while managing the remaining budget and time.
We formulate this setting as a finite-horizon stochastic online allocation problem and propose \emph{Monte Carlo Portfolio Planning} (MCPP), a lightweight closed-loop planner that directly estimates constrained completion probability through simulated workflow executions and replans after observed outcomes. 
Experiments on CodeFlow and ProofFlow demonstrate that MCPP consistently improves constrained completion probability over strong baselines across a wide range of budget--deadline constraints\footnote{Our code and data have been released on \url{https://github.com/WangXinglin/MCPP}.}.

\end{abstract}

\section{Introduction}

Recent advances in large language models (LLMs) have substantially expanded the capabilities of agentic systems, enabling planning and acting in interactive environments \citep{yao2022react,liuagentbench,erdoganplan}, tool use \citep{schick2023toolformer,qintoolllm,patil2024gorilla,ahnorchestrationbench}, and complex multi-step task execution \citep{mialongaia,jimenezswe, zhang2026clawbench}.
To handle open-ended and long-horizon objectives, recent agentic systems increasingly rely on workflow orchestration to expose parallelism, exploit specialization, and manage long-range dependencies: a user request is decomposed into subtasks, assigned to specialized agents, models, or tools, and executed concurrently when dependencies allow \citep{zhuge2024gptswarm,zhang2025agentorchestra,team2026kimi}.
This shift turns agent execution from an isolated model response into a graph-structured workflow, making the organization of subtasks and their information flow an optimizable object in agentic systems \citep{zhang2025multi,zhangg,zhangaflow,wang2025evoagentx,wang2025scoreflow}.

Yet a well-designed workflow does not by itself determine how the executor should carry it out once execution begins, including which model to invoke for each currently executable subtask and how many parallel samples to draw \citep{laju2026nalar}.
Existing work typically studies these execution-time choices from an efficiency-oriented perspective, adapting model selection, routing decisions, or inference behavior to improve the performance--cost--latency frontier \citep{wangagenttts,zhang2026evoroute,ma2026timely,fan2026timebill}.
However, users in real deployments rarely ask whether an agent is generally faster, cheaper, or more accurate on average.
They ask whether this particular workflow can be completed within a specified budget \(B\) and before a specified deadline \(D\) (e.g., polishing a NeurIPS submission within eight hours under a \$100 budget).
A better performance--cost--latency frontier still does not answer this instance-specific question, as it does not specify how to spend the remaining budget and time at each workflow state to maximize the chance of finishing within \(B\) and \(D\).
Under such requirements, cost and time become part of the success condition, shifting the objective from improving performance--cost--latency frontiers to maximizing the probability that the entire workflow completes successfully within the specified budget and deadline (Figure~\ref{fig:intro_overview}).

% Yet even a well-designed workflow does not by itself determine how the system should spend its execution resources once the workflow begins \citep{laju2026nalar}.
% At each execution state, the executor must still decide which model to use for each ready subtask, how many parallel samples to draw, and how much budget to spend now versus preserve for downstream dependencies.
% Existing work addresses aspects of this execution-time allocation problem from an efficiency-oriented perspective, adapting compute allocation, model choices, routing decisions, or inference behavior to improve task success while reducing monetary cost and wall-clock delay \citep{wangagenttts,zhang2026evoroute,ma2026timely,fan2026timebill}.
% However, users in real deployments rarely ask whether an agent is generally faster, cheaper, or more accurate on average.

\begin{figure*}[t]
    \centering
    \includegraphics[width=\textwidth]{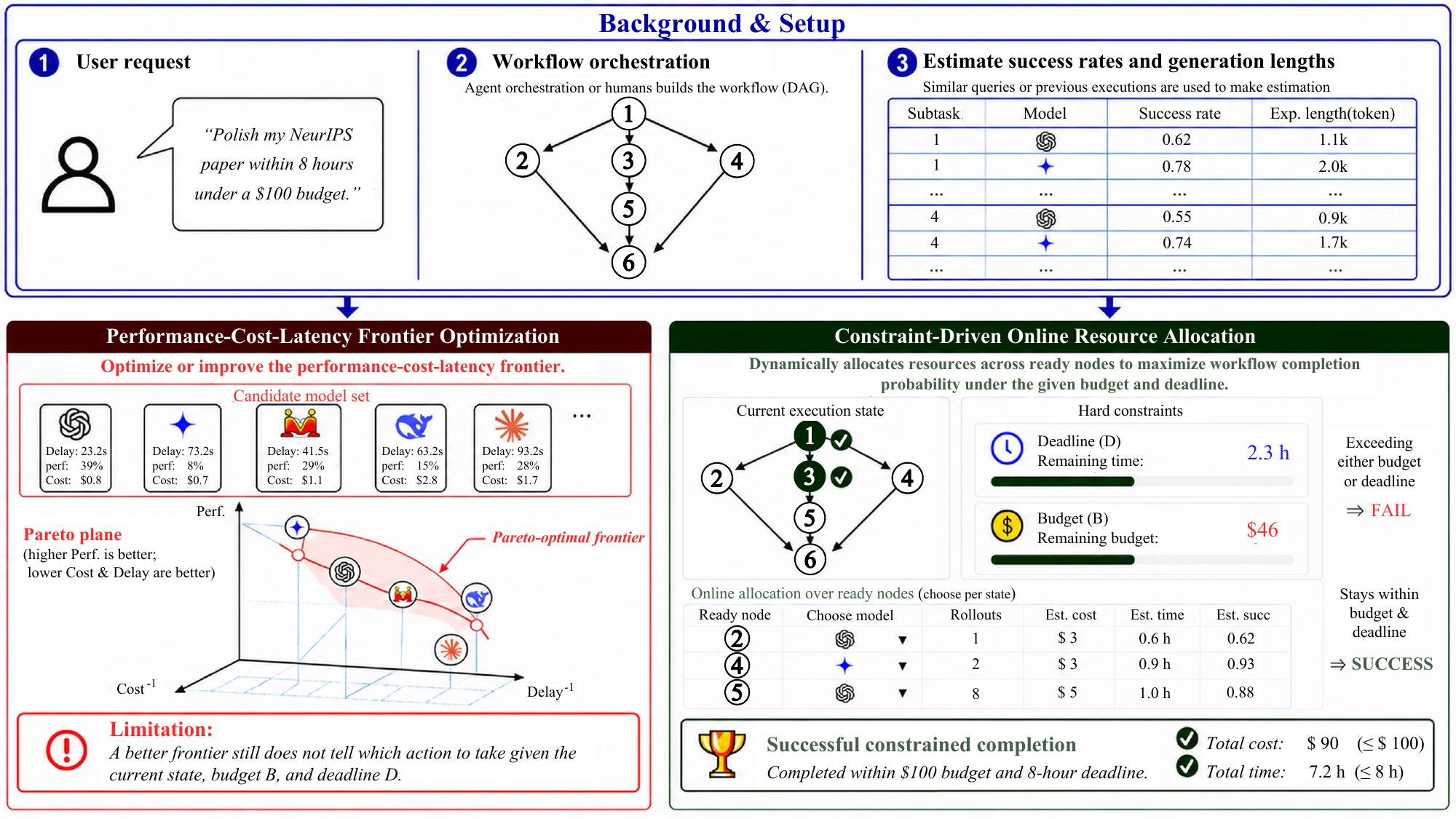}
    \caption{
    Comparison between performance--cost--latency frontier optimization and constraint-driven online resource allocation.
    A user request is first transformed into a static workflow, and success-rate and generation-length estimates for subtask--model pairs can be obtained from similar past queries or executions.
    Previous methods optimize or improve the performance--cost--latency frontier, but a better frontier still does not specify which action to take given the current workflow state, budget \(B\), and deadline \(D\).
    In contrast, our method dynamically allocates models and parallel rollouts to ready subtasks during execution to maximize the probability that the workflow completes successfully within the specified budget and deadline.
    }
    \label{fig:intro_overview}
\end{figure*}

Motivated by this gap, we study \emph{constraint-driven online resource allocation for agentic workflows}. 
The workflow itself may be generated by an orchestrator, designed by humans, or optimized by an external workflow search method. 
Our focus begins once this workflow is given and execution starts. 
We assume that, for each subtask--model pair, the executor has estimates of single-attempt success probability and expected generation length\footnote{Estimating these quantities is complementary to our goal.}, which can be obtained from historical executions, offline calibration, or learned prediction methods \citep{pacchiardi2024100,pan2025route,ding2025best,zhang2026evoroute}.

Given these estimates, the remaining challenge is to decide how to allocate resources during execution. 
This is a non-trivial online problem: each allocation changes the probability of completing currently ready subtasks, consumes budget and wall-clock time, and affects which downstream subtasks may become available later. 
A static assignment of models or sampling widths is therefore insufficient, since the best decision depends on the evolving workflow state, the remaining budget, the remaining time, and the stochastic success or failure events observed so far. 
We formalize the objective as maximizing the \emph{constrained completion probability},
\[
    \Pr_{\pi}\!\left(T_{\mathrm{solve}}\le D,\; C_{\pi}\le B\right),
\]
where \(T_{\mathrm{solve}}\) is the workflow completion time and \(C_{\pi}\) is the total execution cost under policy \(\pi\). 
Although exact dynamic programming gives a clean conceptual solution, it quickly becomes intractable as the completed-subtask set, allocation choices over currently executable subtasks, and stochastic execution outcomes grow combinatorially. 
To obtain a practical solver, we propose Monte Carlo Portfolio Planning (MCPP), a simple yet effective online planning framework.
At each execution state, the planner evaluates candidate resource-allocation actions through simulated workflow executions, selects the action with the highest estimated constrained completion probability, observes the actual outcomes, and replans from the updated state. 
This directly targets the user-facing success event instead of optimizing a soft performance--cost--latency proxy.

To validate the effectiveness of our approach, we conduct experiments on CodeFlow~\citep{wang2025codeflowbench} and ProofFlow~\citep{cabral2025proofflow}, two representative benchmarks for dependency-structured code and proof tasks.
Across varying budget--deadline constraints, our method improves constrained completion probability over strong baselines.
We further conduct noise-injection experiments to evaluate the robustness of our method under noisy estimates of subtask success rates and generation lengths.

Our contributions are summarized as follows:
\begin{itemize}
    \item We introduce \emph{constraint-driven online resource allocation for agentic workflows} to capture a common deployment requirement: whether a given workflow can be completed successfully within a specified budget and deadline. In this setting, budget and deadline are treated as hard success conditions, and the objective is to maximize constrained completion probability.

    \item We formulate this setting as a finite-horizon stochastic online allocation problem and propose \emph{Monte Carlo Portfolio Planning} (MCPP), a closed-loop planner with a safe-improvement guarantee that directly estimates constrained completion probability to allocate models and parallel samples online.

    \item We empirically evaluate MCPP on CodeFlow and ProofFlow, demonstrating consistent gains over strong baselines across diverse budget--deadline constraints, and further validate its robustness under noisy estimates of subtask success rates and generation lengths.
\end{itemize}

\section{Related Work}

\textbf{Resource-Aware Agent Execution.}
As LLM systems move from single-turn responses to long-horizon agentic execution, a central question is how to obtain strong task performance without invoking expensive computation at every step. 
Early work studies this problem at the query level: routing and cascading methods select among models, or call them sequentially, to balance response quality and inference cost~\citep{chenfrugalgpt,dinghybrid,ongroutellm,wang2025mixllm,shen2025sater, jali2026not}. 
This query-level view has recently been extended to more structured and agentic settings, where resource decisions must be made repeatedly during execution. 
AgentTTS allocates test-time compute across stages of complex tasks~\citep{wangagenttts}, and EvoRoute dynamically selects LLM backbones for agentic subtasks to improve the performance--cost--latency frontier~\citep{zhang2026evoroute}. 
Another thread makes latency and wall-clock time explicit, showing that token-efficient inference and time-efficient execution can lead to different strategies~\citep{huang2025latency,wang2025faster,ma2026timely,fan2026timebill}. 
More recent agentic methods further introduce explicit budget control, including budget-aware tool scaling, strict-budget tool-use planning, and budget-aware model routing~\citep{liu2025budget,liu2026budget,zhang2026budget}.

This line of work makes agent execution increasingly resource-aware, but it typically optimizes decisions along a single sequential run, such as choosing the next model, tool, compute budget, or inference behavior based on the current history.
Our work studies a different decision unit: Given a dependency-structured workflow, multiple subtasks may become executable simultaneously, and the executor must allocate base models and parallel samples across them while jointly managing the remaining budget and time.
We therefore optimize the probability that the entire workflow completes within the specified constraints, rather than a per-step routing decision or an average efficiency frontier.

\textbf{Performance and Length Estimation.}
Our framework assumes access to estimates of single-attempt success probability and expected generation length for each subtask--model pair.
These quantities are closely related to two lines of work: performance estimation for deciding which model or strategy to use, and length estimation for predicting the cost and latency of an execution.
LLM routing methods commonly estimate model quality, query difficulty, confidence, or strategy utility to decide which model or reasoning procedure should be selected~\citep{dinghybrid,wang2025mixllm,pan2025route,ding2025best,shen2025sater}.
Some methods further adapt the amount of test-time computation, such as choosing the number of sampled responses or selecting among decoding strategies based on predicted performance, token usage, cost, or latency~\citep{ding2025best,huang2025latency}.
Complementary work predicts model success from historical evaluations, small reference sets, or online feedback, reducing the cost of estimating how likely a model is to solve unseen instances~\citep{pacchiardi2024100,panda2025adaptive}.
These studies provide natural sources for the success-rate and generation-length estimates used in our setting.
Rather than improving these estimators themselves, we study how to use their outputs for online resource allocation over a dependency-structured workflow, and we evaluate robustness when the estimates are noisy.

\textbf{Workflow Orchestration.}
A separate line of work studies the upstream question of how agentic workflows should be designed, generated, or optimized. 
Graph-based formulations represent agents, tools, and model calls as nodes connected by information-flow or dependency edges, turning agent systems into optimizable computational graphs~\citep{zhuge2024gptswarm}. 
Automated workflow search methods such as AFlow optimize executable workflow programs using task feedback~\citep{zhangaflow}, while subsequent work explores multi-agent architecture search, communication topology design, dynamic agent elimination, workflow evolution, and preference-based workflow optimization~\citep{zhang2025multi,zhangg,wang2025agentdropout,wang2025evoagentx,wang2025scoreflow}. 
System-oriented efforts further emphasize hierarchical orchestration, workflow serving, distributed execution, and runtime control for long-horizon agent applications~\citep{zhang2025agentorchestra,team2026kimi,laju2026nalar,zhang2026megaflow}. 
Together, this literature establishes workflow structure as a first-class object in agentic systems. 
Our work is orthogonal to workflow construction: we begin after a workflow has been specified, and study how to execute it under explicit budget and deadline constraints.

\section{Constraint-Driven Online Resource Allocation}
\label{sec:formulation}

We now formalize the execution-time allocation problem studied in this work.
The workflow is assumed to be fixed before execution begins, which be produced by a human designer, an agentic orchestrator, or an automated workflow search method.
Our focus is the following online decision problem:

\begin{quote}
    \emph{Given a workflow, a budget \(B\), a deadline \(D\), and success-rate and generation-length estimates for each subtask--model pair, how should the executor allocate models and parallel samples to currently executable subtasks so as to maximize the probability that the entire workflow completes within \(B\) and \(D\)?}
\end{quote}

\subsection{Problem Formulation}

\paragraph{Workflow, models, and estimates.}
A workflow execution instance is defined as
\[
    \mathcal I=(G,\mathcal M,\Phi,B,D),
\]
where \(G=(V,E)\) is a directed acyclic graph of subtasks, \(\mathcal M\) is the set of available base models, \(\Phi\) denotes estimated success and length statistics, \(B\) is the total budget, and \(D\) is the deadline.
Each node \(v\in V\) is a subtask, and an edge \((u,v)\in E\) means that \(v\) can only be attempted after \(u\) has been completed.
For a completed set \(S\subseteq V\), the currently executable subtasks are
\[
    R(S)=\{v\in V\setminus S:\mathrm{Pred}(v)\subseteq S\},
\]
where \(\mathrm{Pred}(v)\) denotes the predecessors of \(v\).
The workflow is complete when \(S=V\).

For each subtask--model pair \((v,m)\), we assume estimated or calibrated statistics
\[
    \phi_{v,m}=(p_{v,m},\ell_{v,m}),
\]
where \(p_{v,m}\in[0,1]\) is the single-attempt completion probability and \(\ell_{v,m}\) is the expected generation length.
We write these estimated quantities without hats for notational simplicity.
The generation length is converted into per-attempt cost and latency using model-specific pricing and throughput, denoted by \(c_{v,m}\) and \(\tau_{v,m}\), respectively.
If \(k\) independent samples are launched in parallel for subtask \(v\) using model \(m\), the probability that at least one sample succeeds is
\[
    q_{v,m}(k)=1-(1-p_{v,m})^k.
\]
Because the marginal gain of additional samples decreases geometrically, we restrict sampling widths to a finite log-scale set \(\mathcal K\) of positive values (Section~\ref{sec:Onlinesim} gives concrete choice of \(\mathcal K\)).

\paragraph{Online state, action, and transition.}
At any execution step, the state is
\[
    s=(S,b,h),
\]
where \(S\subseteq V\) is the set of completed subtasks, \(b\) is the remaining budget, and \(h\) is the remaining time before the deadline.
The initial state is \(s_0=(\emptyset,B,D)\).
At state \(s=(S,b,h)\), an allocation action assigns a model and a sampling width to every currently executable subtask:
\[
    a=\{(m_v,k_v)\}_{v\in R(S)},
    \qquad
    m_v\in\mathcal M,\quad k_v\in\mathcal K.
\]
Thus, the log-scale action space is
\[
    \mathcal A_{\mathcal K}(S)
    =
    \left\{
        \{(m_v,k_v)\}_{v\in R(S)}
        :
        m_v\in\mathcal M,\;
        k_v\in\mathcal K
    \right\}.
\]
The budget consumed by action \(a\) is
\[
    C(a)=\sum_{v\in R(S)} k_v c_{v,m_v}.
\]
Since currently executable subtasks are executed concurrently, the wall-clock duration is
\[
    \Delta(a)=\max_{v\in R(S)} \Delta_{v,m_v}(k_v),
\]
where \(\Delta_{v,m_v}(k_v)\) is the estimated duration of launching \(k_v\) parallel samples for subtask \(v\) with model \(m_v\).
Under ideal parallelism, \(\Delta_{v,m_v}(k_v)\) is close to the single-attempt latency \(\tau_{v,m_v}\), while more general systems may use an estimated batch-latency model.

If \(C(a)>b\) or \(\Delta(a)>h\), the action violates the remaining constraints and has constrained completion value zero.
Otherwise, after executing \(a\), each currently executable subtask \(v\in R(S)\) completes independently with probability \(q_{v,m_v}(k_v)\).
Let \(U\subseteq R(S)\) be the subset of subtasks completed in this round. Then
\[
    \Pr(U\mid s,a)
    =
    \prod_{v\in U} q_{v,m_v}(k_v)
    \prod_{v\in R(S)\setminus U}
    \left(1-q_{v,m_v}(k_v)\right),
\]
and the next state is
\[
    s'=
    \bigl(
        S\cup U,\,
        b-C(a),\,
        h-\Delta(a)
    \bigr).
\]
This transition captures the core difficulty of workflow execution: one allocation can make partial progress, consume budget and time, and unlock different downstream subtasks depending on stochastic outcomes.

\paragraph{Objective and exact Bellman backup.}
A policy \(\pi\) maps each state \(s=(S,b,h)\) to an allocation action.
Let \(T_{\mathrm{solve}}\) be the wall-clock time at which all subtasks are completed, and let \(C_\pi\) be the total execution cost under policy \(\pi\).
We aim to maximize the constrained completion probability
\[
    J(\pi)
    =
    \Pr_{\pi}\!\left(
        T_{\mathrm{solve}}\le D,\;
        C_{\pi}\le B
    \right).
\]
Budget and time are therefore part of the success condition: an execution that exceeds either limit is not merely inefficient, but fails to satisfy the user's request.

Within the log-scale action space \(\mathcal A_{\mathcal K}\), the optimal constrained completion probability admits a dynamic-programming characterization.
Let \(V^\star_{\mathcal K}(S,b,h)\) denote the optimal value from state \((S,b,h)\).
The terminal conditions are
\[
    V^\star_{\mathcal K}(S,b,h)=1
    \quad\text{if } S=V,
\]
and
\[
    V^\star_{\mathcal K}(S,b,h)=0
    \quad\text{if } S\neq V
    \text{ and there is no } a\in\mathcal A_{\mathcal K}(S)
    \text{ such that } C(a)\le b,\ \Delta(a)\le h.
\]
For all other states,
\[
    V^\star_{\mathcal K}(S,b,h)
    =
    \max_{a\in\mathcal A_{\mathcal K}(S)}
    Q^\star_{\mathcal K}(s,a),
\]
where
\[
    Q^\star_{\mathcal K}(s,a)
    =
    \begin{cases}
    \displaystyle
    \sum_{U\subseteq R(S)}
    \Pr(U\mid s,a)
    V^\star_{\mathcal K}
    \bigl(
        S\cup U,\,
        b-C(a),\,
        h-\Delta(a)
    \bigr),
    & C(a)\le b,\ \Delta(a)\le h,\\[2ex]
    0,
    & \text{otherwise}.
    \end{cases}
\]

This Bellman backup makes the allocation trade-off explicit.
Using stronger models or more samples can increase immediate progress, but it also consumes budget and time that may be needed by downstream subtasks.
Exact dynamic programming is conceptually clean but impractical: the completed set \(S\) can take exponentially many values in \(|V|\), the action space scales as
\[
    (|\mathcal M|\,|\mathcal K|)^{|R(S)|},
\]
and each action induces a distribution over \(2^{|R(S)|}\) possible success subsets.
This combinatorial barrier motivates an online approximation that preserves the constrained completion objective without exhaustive Bellman evaluation.

\subsection{Monte Carlo Portfolio Planning}

To obtain a practical solver, we propose \emph{Monte Carlo Portfolio Planning} (MCPP), a lightweight online planner that approximates the Bellman decision rule with a one-step portfolio rollout backup (Algorithm~\ref{alg:mcpp}).
At state \(s=(S,b,h)\), MCPP constructs a compact candidate action set
\[
    \widetilde{\mathcal A}(s)\subseteq \mathcal A_{\mathcal K}(S).
\]
This set may enumerate the full log-scale action space when few subtasks are executable, or be pruned to a manageable subset when the branching factor is large.
In all cases, it includes the base actions induced by a portfolio of simple continuation policies: \[ \Pi_0= \{\pi_{m,k}:m\in\mathcal M,\; k\in\mathcal K\}. \] Each \(\pi_{m,k}\) is a model-specific Retry-\(k\) policy that assigns model \(m\) and \(k\) parallel samples to every currently executable subtask at each future state: \[ \pi_{m,k}(S,b,h) = \{(m,k)\}_{v\in R(S)}. \] This portfolio spans conservative to aggressive spending behaviors and repeats this coverage for each available base model.

For a candidate current action \(a\in\widetilde{\mathcal A}(s)\) and continuation policy \(\mu\in\Pi_0\), define
\[
    Q_\mu(s,a)
    =
    \Pr\!\left(
        \text{workflow completes within } b,h
        \mid
        \text{first execute } a,
        \text{ then follow } \mu
    \right).
\]
MCPP estimates this probability by Monte Carlo simulation.
Each simulated rollout first applies \(a\), samples the stochastic completed subset, updates the state, and then follows \(\mu\) until the workflow completes or the constraints are violated.
Let \(\widehat Q_\mu(s,a)\) be the resulting Monte Carlo estimate.
The candidate action is scored by its best continuation policy,
\[
    \widehat Q_{\Pi_0}(s,a)
    =
    \max_{\mu\in\Pi_0}
    \widehat Q_\mu(s,a),
\]
and MCPP selects
\[
    a_{\mathrm{MCPP}}(s)
    =
    \arg\max_{a\in\widetilde{\mathcal A}(s)}
    \widehat Q_{\Pi_0}(s,a).
\]

Only the selected current action is executed in the real workflow.
After observing which subtasks actually completed, the executor updates the state and replans.
Thus, MCPP induces a closed-loop execution policy rather than a fixed model assignment or fixed sampling-width rule.

We provide a theoretical analysis showing that, when the candidate set contains the portfolio-induced base actions, MCPP safely improves over the best base policy, up to finite-sample Monte Carlo error, candidate-set approximation error, and errors in the estimated success rates and generation lengths (Appendix~\ref{app:theory}).

\section{Experiments}
In this section, experimental evaluation results are presented to demonstrate the performance of MCPP for deadline-constrained execution of agentic workflows.
We begin by outlining the experiment setup, followed by the main results comparing the MCPP against baselines, and analysis focusing on its scalability, efficiency, action diversity, runtime overhead, and robustness to noise disturbance.
\subsection{Experimental Setup}
\label{sec:experimental_setup}

We evaluate our method on two dependency-structured workflow benchmarks: ProofFlow~\citep{cabral2025proofflow} and CodeFlow~\citep{wang2025codeflowbench}.
ProofFlow evaluates formal-reasoning workflows in which an input problem is decomposed into dependent formalization and proving subtasks, while CodeFlow evaluates code-generation workflows in which a programming task is decomposed into dependent implementation and verification subtasks.
To accurately evaluate different allocation policies, we first construct empirical execution profiles through large-scale rollouts to capture the statistics of each model--subtask pair, such as success rate and output-token length.
Then, numerous offline experiments are conducted to enable statistically reliable evaluation of stochastic workflow execution to measure completion probability under identical budget--deadline settings.
More details about the offline workflow pool construction are provided in Appendix~\ref{sec:pool build}, while the action space of MCPP for these two benchmarks are detailed in Appendix~\ref{sec:Onlinesim}.

We compare against two representative baselines.
\textbf{Uniform} is a static allocation baseline that distributes the available budget uniformly across subtasks before execution and dispatches the assigned rollouts when subtasks become ready.
It represents a static strategy that respects workflow dependencies and hard budget constraints, but does not adapt to intermediate outcomes.
We report the best Uniform result over available model families.
\textbf{Retry} is an event-driven online baseline that observes execution events and applies a fixed local retry-width rule to ready subtasks.
It represents the natural alternative of online local retrying without downstream planning.
We report the best Retry result over both model families and fixed retry-width choices.

\subsection{Main Results}
\label{sec:main_results}
As shown in Figure~\ref{fig:main_result}, our method has the highest success rate in all budget regimes for both formal-reasoning and code-generation workflows.
The improvement is particularly significant under the tightest budget of $B=\$0.05$.
With deadline-aware lookahead, the proposed method can rapidly reach a high success rate once the deadline becomes moderately feasible, while both baselines remain substantially lower.
Under moderate and loose budgets, i.e., $B=\$1$ and $B=\$20$, our method can also converge rapidly, achieving a success rate close to 1 within the 120-minute deadline.
The gains come from planning under joint budget--deadline constraints.
To complete a workflow under hard constraints, the executor must decide not only whether a ready subtask should be attempted, but also how much budget and time should be spent now versus preserved for downstream dependencies.
Uniform allocation ignores this state-dependent trade-off, while Retry reacts to observed execution events but still uses a fixed local rule.
In contrast, MCPP evaluates current actions by simulating downstream workflow completions under the remaining budget and deadline, allowing the planner to prioritize bottleneck subtasks and preserve resources for future dependencies.
\begin{figure}[t]
    \centering
    \includegraphics[width=0.85\textwidth]{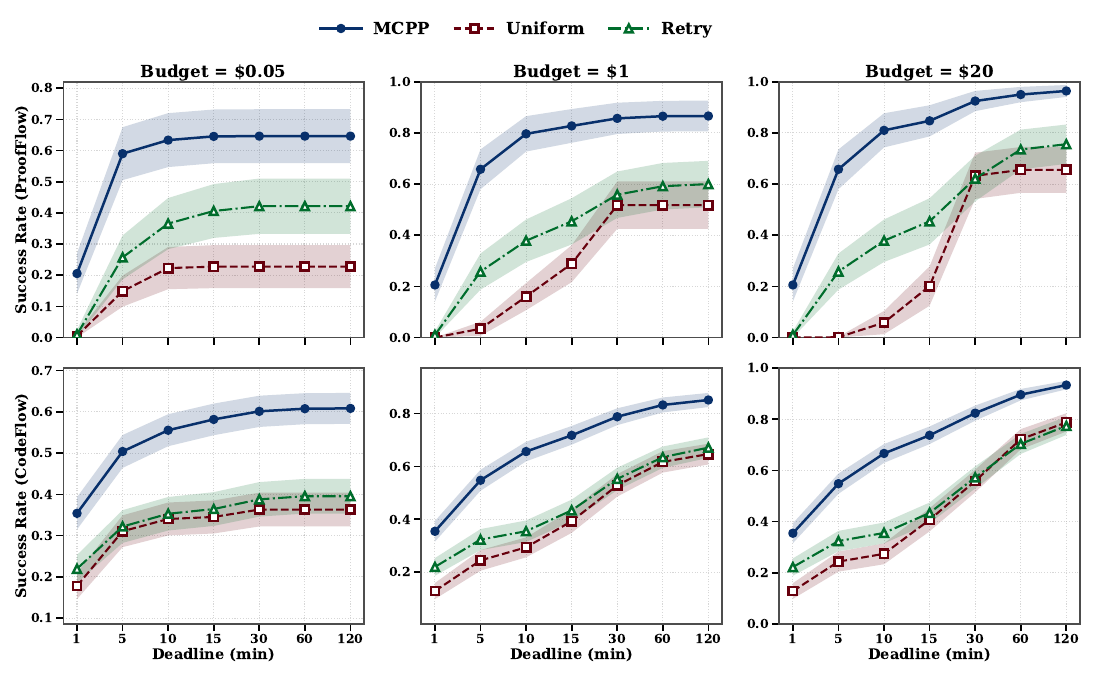}
    \caption{
    \textbf{Main results across workflow benchmarks.}
    We evaluate Monte Carlo Portfolio Planning (MCPP) under the same budget--deadline settings on ProofFlow and CodeFlow.
    }
    \label{fig:main_result}
\end{figure}

\subsection{Analysis}
\label{sec:analysis}

We analyze the proposed method from the following perspectives:
(1) the effect of action-space diversity,
(2) robustness to noisy execution-profile estimates,
(3) scaling behavior with workflow size (Appendix~\ref{sec:node_range}),
(4) runtime overhead (Appendix~\ref{sec:runtime_analysis}),
(5) the effect of Monte Carlo searches (Appendix~\ref{sec:m_sweep}).
Unless otherwise specified, analysis experiments are conducted on ProofFlow.

\paragraph{Effect of Action-Space Diversity}
\label{sec:action_space_diversity}
We first analyze how expanding the planner's action space affects constrained completion probability.
The action space contains two complementary dimensions of model choice and rollout-width choice.
Specifically, model choice determines the success-latency-cost profile of each attempt, while rollout width determines how aggressively the planner samples the selected model on a ready subtask.
Together, these dimensions allow the planner to allocate not only the right model, but also the right amount of computation at each workflow state.

Figure~\ref{fig:model_count} shows that the full three-model portfolio consistently outperforms both the two-model and one-model settings, which indicates that selecting a single globally strongest model is not always sufficient for workflow-level constrained execution.
Since model usefulness is subtask-dependent, expensive but strong models can be valuable for difficult bottleneck nodes, while cheaper models can suffice for easier nodes or tighter budgets.
Figure~\ref{fig:k_sweep} further shows that richer rollout-width portfolios improve performance by allowing the planner to vary compute intensity across nodes and states.
These results show that the gain comes from action diversity: heterogeneous model families provide different success--latency--cost trade-offs, and multiple rollout widths provide different levels of computational investment.
Accordingly, Monte Carlo lookahead then selects among these node--model--rollout actions based on their downstream constrained completion probability.

\begin{figure}[t]
    \centering
    \includegraphics[width=0.9\textwidth]{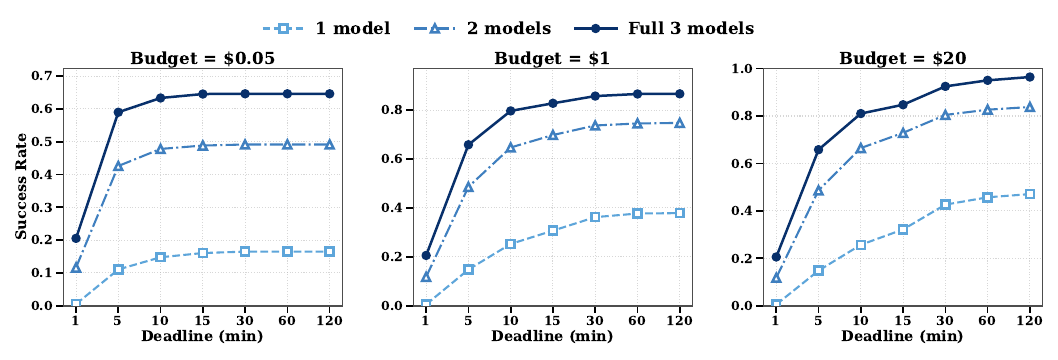}
    \caption{
    \textbf{Effect of model portfolio size.}
        The full three-model portfolio consistently outperforms smaller portfolios, showing that model diversity improves workflow-level constrained execution.
        }
    \label{fig:model_count}
\end{figure}
\begin{figure}[t]
    \centering
    \includegraphics[width=1\textwidth]{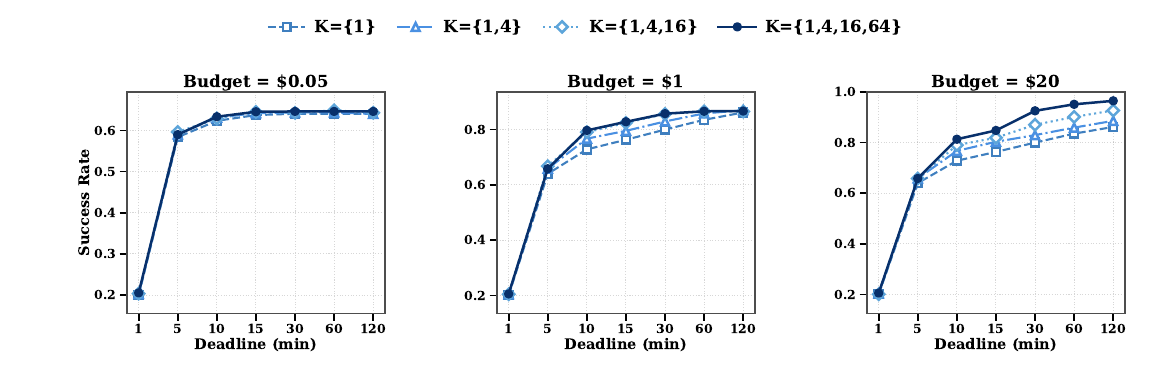}
    \caption{
    \textbf{Effect of rollout-width portfolio.}
    Richer rollout-width portfolios improve performance by allowing the planner to adapt compute intensity across nodes and states.
    }
    \label{fig:k_sweep}
\end{figure}

\paragraph{Robustness to Noisy Estimation}
\label{sec:robustness}

To evaluate robustness to imperfect empirical profiles, we perturb the MCPP planner-visible rollout pools with noise in token length and subtask success rate, while actual execution outcomes are still sampled from the true empirical pools.
For token length, we add Gaussian noise after normalizing each node--model pool by its maximum token length, using noise levels $\sigma\in\{3,4,5\}$.
For success rate, we add Gaussian noise directly to the empirical success rate with $\sigma\in\{0.1,0.2,0.3\}$, and minimally flip planning-pool labels to match the perturbed rate.
The detailed perturbation approaches are provided in Appendix~\ref{app:robustness_details}.

Table~\ref{tab:robustness_noise} shows that MCPP remains robust under substantial planner-side noise. 
Token-length noise causes only mild degradation, while success-rate noise has a larger effect, with the largest drop of 6.01 percentage points under \(\sigma=0.3\).
This is because token-length noise is continuous and can partially average out when costs are accumulated across sampled rollouts, whereas success-probability noise flips discrete success labels and directly changes the completion events used by the planner.
Thus, success-rate noise more strongly affects action-value estimates and action rankings.
\begin{table}[t]
    \centering
    \small
    \renewcommand{\arraystretch}{0.95}
    \setlength{\tabcolsep}{4.5pt}
    \caption{
    \textbf{Robustness to noisy token-length and success-rate profiles.}
    Each cell reports the change in constrained completion probability relative to the clean setting, measured in percentage points.
    }
    \label{tab:robustness_noise}
    \begin{tabular}{lcccccc}
        \toprule
        \textbf{Deadline} &
        \multicolumn{3}{c}{\textbf{Token-length noise}} &
        \multicolumn{3}{c}{\textbf{Success-rate noise}} \\
        \cmidrule(lr){2-4} \cmidrule(lr){5-7}
        & $\sigma=3$ & $\sigma=4$ & $\sigma=5$
        & $\sigma=0.1$ & $\sigma=0.2$ & $\sigma=0.3$ \\
        \midrule
        $10$ min  & $-1.97$ & $-1.81$ & $-1.66$ & $-2.10$ & $-4.01$ & $-6.01$ \\
        $30$ min  & $-1.90$ & $-1.78$ & $-2.09$ & $-2.47$ & $-4.05$ & $-4.61$ \\
        $120$ min & $-1.02$ & $-0.86$ & $-0.84$ & $-0.66$ & $-1.65$ & $-2.76$ \\
        \bottomrule
    \end{tabular}
\end{table}

\section{Conclusions}

In this work, we formulate \emph{constraint-driven online resource allocation for agentic workflows}, a hard-constrained execution setting in which the objective is not to optimize an average performance--cost--latency frontier, but to maximize the probability that a given workflow completes successfully within a specified budget and deadline.
We model this problem as a finite-horizon stochastic online allocation process over dependency-structured workflows, where the executor dynamically allocates base models and parallel samples to currently executable subtasks according to the remaining budget, remaining time, and observed execution outcomes.
To obtain a practical solver, we propose Monte Carlo Portfolio Planning (MCPP), a lightweight closed-loop planner that directly estimates constrained completion probability through simulated workflow executions and replans after each observed outcome.
Theoretically, we provide a safe-improvement analysis showing that MCPP improves over the best base policy in its portfolio up to finite-sample, candidate-set, and estimation errors.
Empirically, experiments on CodeFlow and ProofFlow demonstrate that MCPP consistently improves constrained completion probability over strong baselines across diverse budget--deadline constraints, with further analyses showing robustness to noisy estimates of subtask success rates and generation lengths.

\textbf{Limitations and future directions.}
MCPP relies on estimates of subtask success rates and generation lengths. 
Although our noise-injection experiments suggest robustness to moderate estimation errors, better calibration and uncertainty-aware estimation could further improve deployment reliability.
In addition,  our experiments focus on dependency-structured code and proof workflows. Extending this setting to more open-ended tool-use, web, and multi-agent workflows would further clarify the generality of constraint-driven execution under real deployment constraints.

\clearpage
\bibliography{anthology,custom}
\bibliographystyle{acl_natbib}

\clearpage
\appendix

\section{Monte Carlo Portfolio Planning Algorithm}
\label{app:mcpp_algorithm}

\begin{algorithm}[ht]
\caption{Monte Carlo Portfolio Planning (MCPP)}
\label{alg:mcpp}
\begin{algorithmic}[1]
\Require Workflow instance \(\mathcal I=(G,\mathcal M,\Phi,B,D)\), log-scale sampling widths \(\mathcal K\), simulation budget \(N_{\mathrm{sim}}\).
\Ensure Final execution outcome: \(\mathrm{SUCCESS}\) or \(\mathrm{FAILURE}\).
\State Initialize \(s=(S,b,h)\gets(\emptyset,B,D)\).
\State Construct portfolio \(\Pi_0\gets\{\pi_{m,k}:m\in\mathcal M,\ k\in\mathcal K\}\).
\While{\(S\neq V\)}
    \State \(R(S)\gets\{v\in V\setminus S:\mathrm{Pred}(v)\subseteq S\}\).
    \State \(\widetilde{\mathcal A}(s)\gets \textsc{Candidates}(s;\mathcal M,\mathcal K,\Phi)\cup\{\pi_{m,k}(s):m\in\mathcal M,\ k\in\mathcal K\}\).
    \State \(\widetilde{\mathcal A}_{\mathrm{feas}}(s)\gets\{a\in\widetilde{\mathcal A}(s):C(a)\le b,\ \Delta(a)\le h\}\).
    \If{\(\widetilde{\mathcal A}_{\mathrm{feas}}(s)=\emptyset\)}
        \State \Return \(\mathrm{FAILURE}\).
    \EndIf
    \For{\textbf{each} \(a\in\widetilde{\mathcal A}_{\mathrm{feas}}(s)\)}
        \State \(\widehat Q_{\Pi_0}(s,a)\gets \max_{\mu\in\Pi_0}\textsc{MCValue}(s,a,\mu,N_{\mathrm{sim}};\Phi)\).
    \EndFor
    \State \(a^\star\gets\arg\max_{a\in\widetilde{\mathcal A}_{\mathrm{feas}}(s)}\widehat Q_{\Pi_0}(s,a)\).
    \State Execute \(a^\star\) in the real workflow and observe completed subtasks \(U\subseteq R(S)\).
    \State Update \(S\gets S\cup U,\quad b\gets b-C(a^\star),\quad h\gets h-\Delta(a^\star)\).
\EndWhile
\State \Return \(\mathrm{SUCCESS}\).

\Function{MCValue}{$s,a,\mu,N_{\mathrm{sim}};\Phi$}
    \State \(n_{\mathrm{succ}}\gets 0\).
    \For{\(i=1\) \textbf{to} \(N_{\mathrm{sim}}\)}
        \State \(\tilde{s}\gets\textsc{SimulateOneStep}(s,a;\Phi)\).
        \State \(n_{\mathrm{succ}}\gets n_{\mathrm{succ}}+\textsc{Rollout}(\tilde{s},\mu;\Phi)\).
    \EndFor
    \State \Return \(n_{\mathrm{succ}}/N_{\mathrm{sim}}\).
\EndFunction
\end{algorithmic}
\end{algorithm}

As shown in Algorithm~\ref{alg:mcpp}, MCPP implements a closed-loop planning procedure: at each state, it constructs candidate actions, estimates their downstream constrained completion probabilities, executes only the selected action, and then replans after observing the actual outcome. 
The subroutine \textsc{MCValue} estimates the value of a candidate action by first simulating that action and then rolling out a continuation policy \(\mu\) from the portfolio. 
Here, \textsc{Rollout} returns a binary indicator of whether the simulated continuation completes the workflow within the remaining budget and deadline; if a simulated action violates the remaining constraints, the simulation is counted as a failure. 
The continuation policy \(\mu\) is therefore used only for scoring the current candidate action, while the real workflow executes only \(a^\star\), observes the completed subtasks, updates the state, and replans.

\section{Theory: Portfolio Rollout Planning Provides Safe Improvement}
\label{app:theory}

We analyze the Monte Carlo portfolio planner used in our framework.
The planner can be viewed as a one-step rollout backup: at the current workflow state, it evaluates candidate allocation actions by simulating their future consequences under a portfolio of simple continuation policies.
The goal is not to prove global optimality over the full Bellman action space, which is intractable for realistic workflows.
Instead, we show that the portfolio planner provides a safe improvement over the best base policy in the portfolio, up to Monte Carlo estimation error, candidate-set approximation error, and errors in the estimated success rates and generation lengths.

\subsection{Setup: states, base policies, and rollout values}

Consider a workflow execution state
\[
    s=(S,b,h),
\]
where \(S\) is the set of completed subtasks, \(b\) is the remaining budget, and \(h\) is the remaining time before the deadline.
Let \(R(S)\) denote the set of currently executable subtasks.

Each allocation action assigns a base model and a parallel sampling width to every executable subtask:
\[
    a=\{(m_v,k_v)\}_{v\in R(S)},
    \qquad
    m_v\in \mathcal M,\quad
    k_v\in\mathcal K,
\]
where \(\mathcal M\) is the candidate model set and \(\mathcal K\) is a finite log-scale set of positive sampling widths.
The log-scale grid reflects the diminishing marginal return of parallel samples.
If a single attempt succeeds with probability \(p\), then \(k\) independent samples succeed with probability
\[
    q(k)=1-(1-p)^k,
\]
whose marginal gain satisfies
\[
    q(k+1)-q(k)=p(1-p)^k.
\]
Thus, larger sampling widths provide progressively smaller additional gains, making exponentially spaced widths a compact coverage of conservative-to-aggressive allocation regimes.
The concrete choice of \(\mathcal K\) is specified in the experimental setup.

Let \(\widetilde{\mathcal A}(s)\) be the candidate action set evaluated by the planner at state \(s\).
The planner also uses a portfolio of base continuation policies:
\[
    \Pi_0
    =
    \{\pi_{m,k}:m\in\mathcal M,\;k\in\mathcal K\}.
\]
Each \(\pi_{m,k}\) is a model-specific Seq-\(k\) policy.
At every future state \(s'=(S',b',h')\), it assigns the same model \(m\) and sampling width \(k\) to every executable subtask:
\[
    \pi_{m,k}(s')
    =
    \{(m,k)\}_{v\in R(S')}.
\]
If this action violates the remaining budget or deadline, its value is defined to be zero.

For any policy \(\mu\in\Pi_0\), let \(V^\mu(s)\) denote its constrained completion probability from state \(s\):
\[
    V^\mu(s)
    =
    \Pr_\mu
    \big(
        \text{the workflow completes within remaining budget } b
        \text{ and remaining time } h
        \mid s
    \big).
\]
For a candidate current action \(a\in\widetilde{\mathcal A}(s)\), define the rollout value under continuation policy \(\mu\) as
\[
    Q_\mu(s,a)
    =
    \mathbb E_{s'\sim P(\cdot\mid s,a)}
    \left[
        V^\mu(s')
    \right],
\]
where \(P(\cdot\mid s,a)\) is the stochastic transition induced by executing \(a\).
The portfolio value of action \(a\) is
\[
    Q_{\Pi_0}(s,a)
    =
    \max_{\mu\in\Pi_0} Q_\mu(s,a).
\]

\subsection{Exact portfolio improvement}

The key requirement is that the candidate action set contains the first-step actions of all base policies.

\begin{assumption}[Base actions are included]
\label{assump:base_actions}
For every \(\mu\in\Pi_0\), the action \(\mu(s)\) is included in \(\widetilde{\mathcal A}(s)\).
If \(\mu(s)\) violates the remaining budget or deadline, it is still treated as a candidate action with value zero.
\end{assumption}

\begin{definition}[Exact portfolio planner]
The exact portfolio planner chooses
\[
    a_{\Pi_0}(s)
    \in
    \arg\max_{a\in\widetilde{\mathcal A}(s)}
    Q_{\Pi_0}(s,a)
    =
    \arg\max_{a\in\widetilde{\mathcal A}(s)}
    \max_{\mu\in\Pi_0} Q_\mu(s,a).
\]
After executing \(a_{\Pi_0}(s)\), the planner observes the actual stochastic outcome, updates the workflow state, and replans.
\end{definition}

\begin{theorem}[State-wise safe improvement over the portfolio]
\label{thm:statewise_safe_improvement}
Under Assumption~\ref{assump:base_actions}, the exact portfolio planner satisfies
\[
    Q_{\Pi_0}\bigl(s,a_{\Pi_0}(s)\bigr)
    \ge
    \max_{\mu\in\Pi_0}
    V^\mu(s).
\]
\end{theorem}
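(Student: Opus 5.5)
The plan is to chain three elementary facts: a consistency identity for each base policy, the inclusion of base actions in the candidate set (Assumption~\ref{assump:base_actions}), and the definition of the planner's maximization. The first step is to fix an arbitrary \(\mu\in\Pi_0\) and establish
\[
    Q_\mu\bigl(s,\mu(s)\bigr)=V^\mu(s).
\]
This is the one-step Bellman identity for the stationary Markov policy \(\mu\). Following \(\mu\) from \(s\) means first executing \(\mu(s)\), landing in \(s'\sim P(\cdot\mid s,\mu(s))\), and then following \(\mu\) from \(s'\). By the Markov property of the transition model in Section~\ref{sec:formulation}, the conditional completion probability given \(s'\) is exactly \(V^\mu(s')\), and averaging over \(s'\) gives \(Q_\mu(s,\mu(s))\).

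Two boundary cases need separate checks. If \(S=V\), the workflow is already complete and \(V^\mu(s)=1\) trivially; here the planner is not invoked, or one adopts the convention that the empty action has value 1. If \(\mu(s)\) violates the remaining budget or deadline, then by the convention stated in the setup and in Assumption~\ref{assump:base_actions} both sides equal zero. Since \(\mu\) is a fixed deterministic map of the state, no extra randomness enters, and the identity holds.

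The second step chains the inequalities. By Assumption~\ref{assump:base_actions}, \(\mu(s)\in\widetilde{\mathcal A}(s)\). Therefore
\[
    V^\mu(s)=Q_\mu\bigl(s,\mu(s)\bigr)\le \max_{\nu\in\Pi_0}Q_\nu\bigl(s,\mu(s)\bigr)=Q_{\Pi_0}\bigl(s,\mu(s)\bigr)\le \max_{a\in\widetilde{\mathcal A}(s)}Q_{\Pi_0}(s,a)=Q_{\Pi_0}\bigl(s,a_{\Pi_0}(s)\bigr).
\]
The last equality is the definition of \(a_{\Pi_0}(s)\) as an argmax. The candidate set is finite because \(\mathcal M\), \(\mathcal K\) and \(R(S)\) are finite, so the maximum is attained. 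Since \(\mu\) was arbitrary, taking the maximum over \(\mu\in\Pi_0\) yields the theorem.

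The only step with real content is the identity \(Q_\mu(s,\mu(s))=V^\mu(s)\). It requires stating precisely that \(V^\mu\) is defined through the same transition kernel \(P\) and the same zero-value convention for infeasible actions that define \(Q_\mu\). Once those definitions are aligned, the identity follows by conditioning on the first transition, and the remainder of the argument is bookkeeping.
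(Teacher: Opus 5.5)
Your proposal is correct and follows the paper's proof essentially step for step: the paper also combines the fixed-policy Bellman identity \(V^\mu(s)=Q_\mu(s,\mu(s))\), the inclusion \(\mu(s)\in\widetilde{\mathcal A}(s)\) from Assumption~\ref{assump:base_actions}, and the definition of \(a_{\Pi_0}(s)\) as a maximizer, and then maximizes over \(\mu\in\Pi_0\). Your separate checks of the terminal and infeasible-action cases, and your remark that the maximum is attained, are details the paper leaves implicit.
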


\begin{proof}
By definition,
\[
    Q_{\Pi_0}\bigl(s,a_{\Pi_0}(s)\bigr)
    =
    \max_{a\in\widetilde{\mathcal A}(s)}
    \max_{\nu\in\Pi_0}
    Q_\nu(s,a).
\]
For any \(\mu\in\Pi_0\), Assumption~\ref{assump:base_actions} implies that \(\mu(s)\in\widetilde{\mathcal A}(s)\).
By the Bellman identity for the fixed policy \(\mu\),
\[
    V^\mu(s)
    =
    Q_\mu(s,\mu(s)).
\]
Therefore,
\[
    Q_{\Pi_0}\bigl(s,a_{\Pi_0}(s)\bigr)
    \ge
    Q_\mu(s,\mu(s))
    =
    V^\mu(s).
\]
Taking the maximum over \(\mu\in\Pi_0\) proves the claim.
\end{proof}

Theorem~\ref{thm:statewise_safe_improvement} shows that the exact portfolio backup cannot be worse than simply following the best base policy in \(\Pi_0\) from the current state.
The planner may choose an action that is not itself a Seq-\(k\) action, but the value of that action, evaluated under the best continuation policy in the portfolio, is at least the value of the best portfolio policy.

\begin{assumption}[Finite execution horizon]
\label{assump:finite_horizon}
The execution process has a finite maximum remaining horizon.
Equivalently, from every non-terminal state, the process terminates after at most \(H_{\max}<\infty\) additional decision rounds under any feasible policy.
\end{assumption}

We can also interpret the exact result in the closed-loop setting.
Let \(\pi_{\mathrm{exact}}\) be the policy that applies the exact portfolio planner at every visited state.

\begin{theorem}[Closed-loop portfolio improvement]
\label{thm:closed_loop_improvement}
Assume Assumptions~\ref{assump:base_actions} and~\ref{assump:finite_horizon}.
Then for any state \(s\),
\[
    V^{\pi_{\mathrm{exact}}}(s)
    \ge
    \max_{\mu\in\Pi_0} V^\mu(s).
\]
\end{theorem}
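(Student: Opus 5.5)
The plan is a backward induction on the remaining decision horizon guaranteed by Assumption~\ref{assump:finite_horizon}. We combine the one-step guarantee of Theorem~\ref{thm:statewise_safe_improvement} with the fact that an expectation of a maximum dominates a maximum of expectations. For \(n=0,1,\dots,H_{\max}\), let \(\mathcal S_n\) be the set of states from which the process terminates within at most \(n\) further decision rounds under any feasible policy. We prove by induction on \(n\) that
\[
    V^{\pi_{\mathrm{exact}}}(s)\ \ge\ \max_{\mu\in\Pi_0}V^\mu(s)
    \qquad\text{for all } s\in\mathcal S_n .
\]
Since every state lies in \(\mathcal S_{H_{\max}}\), this gives the theorem.

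\emph{Base case.} A state in \(\mathcal S_0\) is terminal. If \(S=V\), then every policy has value \(1\), so the inequality holds with equality. Otherwise no feasible action exists, so every \(V^\mu(s)=0\) and the inequality is trivial. We also handle the degenerate case where the planner's selected action \(a_{\Pi_0}(s)\) is itself infeasible, which Assumption~\ref{assump:base_actions} allows with value zero. Then \(Q_{\Pi_0}(s,a_{\Pi_0}(s))=0\), and Theorem~\ref{thm:statewise_safe_improvement} forces \(\max_\mu V^\mu(s)=0\), so the claim holds at \(s\) without further work.

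\emph{Inductive step.} Let \(s\in\mathcal S_{n+1}\) be non-terminal, write \(a=a_{\Pi_0}(s)\), and assume \(a\) is feasible. Every successor \(s'\sim P(\cdot\mid s,a)\) lies in \(\mathcal S_n\), because budget and time have been consumed and one round has elapsed. The closed-loop policy executes \(a\) and then continues as \(\pi_{\mathrm{exact}}\) from \(s'\), so the Bellman identity for this fixed policy gives the chain
\[
    V^{\pi_{\mathrm{exact}}}(s)
    =\mathbb E_{s'}\bigl[V^{\pi_{\mathrm{exact}}}(s')\bigr]
    \ \ge\ \mathbb E_{s'}\Bigl[\max_{\mu\in\Pi_0}V^\mu(s')\Bigr]
    \ \ge\ \max_{\mu\in\Pi_0}\mathbb E_{s'}\bigl[V^\mu(s')\bigr]
    = Q_{\Pi_0}(s,a).
\]
The first inequality is the induction hypothesis applied pointwise to each successor. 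The second holds because \(\mathbb E[\max]\ge\max\mathbb E\). The final equality is the definition of \(Q_\mu\) and \(Q_{\Pi_0}\). Theorem~\ref{thm:statewise_safe_improvement} then gives \(Q_{\Pi_0}(s,a)\ge\max_\mu V^\mu(s)\), which closes the induction.

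The step that needs the most care is checking that the ingredients are applied to consistent objects. First, \(V^{\pi_{\mathrm{exact}}}\) is well-defined and satisfies its one-step Bellman identity. This follows because \(\pi_{\mathrm{exact}}\) is a deterministic stationary Markov policy on \(s=(S,b,h)\), with ties in the \(\arg\max\) broken by a fixed rule. Second, the successor states really do have strictly smaller remaining horizon, so the induction is well-founded; this is exactly what Assumption~\ref{assump:finite_horizon} provides. Third, infeasible actions receive value zero consistently in \(V^{\pi_{\mathrm{exact}}}\), in each \(V^\mu\), and in \(Q_\mu\). The key conceptual point is that the inductive claim compares against \(\max_\mu V^\mu(s')\) at each successor, not against a single fixed \(\mu\). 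This is what allows the closed-loop policy to switch to a different base policy after each observed outcome, and it is why the \(\mathbb E[\max]\ge\max\mathbb E\) step is needed.
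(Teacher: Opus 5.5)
Your proof is correct and follows essentially the paper's argument: backward induction on the remaining horizon, with Theorem~\ref{thm:statewise_safe_improvement} closing the inductive step. Where you use \(\mathbb E[\max]\ge\max\mathbb E\), the paper instead fixes a continuation \(\mu^\star\) attaining \(Q_{\Pi_0}(s,a_{\Pi_0}(s))\) and applies the induction hypothesis as \(V^{\pi_{\mathrm{exact}}}(s')\ge V^{\mu^\star}(s')\), which is the same step; your explicit handling of an infeasible selected action is a detail the paper leaves implicit.
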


\begin{proof}
We prove the claim by backward induction on the maximum number of remaining execution rounds.
The statement is immediate at terminal states.

Consider a non-terminal state \(s\).
Let
\[
    a^\star
    =
    a_{\Pi_0}(s)
\]
be the exact portfolio action selected at \(s\), and let \(\mu^\star\in\Pi_0\) be a continuation policy attaining
\[
    Q_{\Pi_0}(s,a^\star)
    =
    Q_{\mu^\star}(s,a^\star).
\]
After executing \(a^\star\), the next state \(s'\) is drawn from \(P(\cdot\mid s,a^\star)\).
By the induction hypothesis,
\[
    V^{\pi_{\mathrm{exact}}}(s')
    \ge
    V^{\mu^\star}(s')
\]
for every next state \(s'\).
Therefore,
\[
\begin{aligned}
    V^{\pi_{\mathrm{exact}}}(s)
    &=
    \mathbb E_{s'\sim P(\cdot\mid s,a^\star)}
    \left[
        V^{\pi_{\mathrm{exact}}}(s')
    \right] \\
    &\ge
    \mathbb E_{s'\sim P(\cdot\mid s,a^\star)}
    \left[
        V^{\mu^\star}(s')
    \right] \\
    &=
    Q_{\mu^\star}(s,a^\star) \\
    &=
    Q_{\Pi_0}(s,a^\star) \\
    &\ge
    \max_{\mu\in\Pi_0} V^\mu(s),
\end{aligned}
\]
where the last inequality follows from Theorem~\ref{thm:statewise_safe_improvement}.
\end{proof}

The closed-loop result applies to the exact portfolio planner.
For the finite-sample planner, we provide a state-wise guarantee below.
A full closed-loop finite-sample guarantee would require uniform concentration over all states visited by the planner.

\subsection{Finite-sample Monte Carlo estimation}
\label{app: finite}
In practice, the rollout values \(Q_\mu(s,a)\) are not available exactly.
For each candidate pair \((a,\mu)\), the planner runs \(N\) independent simulations.
Let
\[
    Y_i(a,\mu)
    =
    \mathbf 1[
        \text{simulation } i
        \text{ completes the workflow within the remaining budget and deadline}
    ].
\]
Then
\[
    \widehat Q_\mu(s,a)
    =
    \frac{1}{N}
    \sum_{i=1}^{N} Y_i(a,\mu)
\]
is an unbiased estimator of \(Q_\mu(s,a)\).

The empirical portfolio value of action \(a\) is
\[
    \widehat Q_{\Pi_0}(s,a)
    =
    \max_{\mu\in\Pi_0}
    \widehat Q_\mu(s,a).
\]
The finite-sample planner selects
\[
    \widehat a
    \in
    \arg\max_{a\in\widetilde{\mathcal A}(s)}
    \widehat Q_{\Pi_0}(s,a).
\]
Let
\[
    \widehat\mu
    \in
    \arg\max_{\mu\in\Pi_0}
    \widehat Q_\mu(s,\widehat a)
\]
be the continuation policy attaining the empirical score for \(\widehat a\).
Only \(\widehat a\) is executed in the real workflow; \(\widehat\mu\) is used only to score the current action.

Let
\[
    L(s)
    =
    |\widetilde{\mathcal A}(s)|\,|\Pi_0|
\]
be the number of action--continuation pairs evaluated at state \(s\).

\begin{theorem}[Finite-sample state-wise guarantee]
\label{thm:finite_sample_statewise}
Fix a state \(s\) and confidence level \(\delta\in(0,1)\).
With probability at least \(1-\delta\),
\[
    Q_{\widehat\mu}(s,\widehat a)
    \ge
    \max_{a\in\widetilde{\mathcal A}(s),\,\mu\in\Pi_0}
    Q_\mu(s,a)
    -
    2\epsilon(s,\delta),
\]
where
\[
    \epsilon(s,\delta)
    =
    \sqrt{
        \frac{
            \log(2L(s)/\delta)
        }{
            2N
        }
    }.
\]
Consequently, under Assumption~\ref{assump:base_actions},
\[
    Q_{\widehat\mu}(s,\widehat a)
    \ge
    \max_{\mu\in\Pi_0}V^\mu(s)
    -
    2\epsilon(s,\delta).
\]
\end{theorem}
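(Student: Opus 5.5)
The plan is a standard uniform-concentration argument over the finite set of action--continuation pairs at the fixed state \(s\). Then I would chain it with Theorem~\ref{thm:statewise_safe_improvement}.

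\emph{Step 1: concentration for each pair.} For each pair \((a,\mu)\in\widetilde{\mathcal A}(s)\times\Pi_0\), the indicators \(Y_1(a,\mu),\dots,Y_N(a,\mu)\) are i.i.d.\ \(\{0,1\}\)-valued with mean \(Q_\mu(s,a)\). Since \(s\) is fixed, the candidate set and the pairs are deterministic, so no data-dependent selection enters at this stage. Hoeffding's inequality gives
\[
\Pr\bigl(|\widehat Q_\mu(s,a)-Q_\mu(s,a)|>\epsilon\bigr)\le 2e^{-2N\epsilon^2}.
\]
A union bound over the \(L(s)\) pairs, with \(\epsilon=\epsilon(s,\delta)\), makes the total failure probability at most \(2L(s)e^{-2N\epsilon^2}=\delta\). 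On the complementary event \(\mathcal E\), which has probability at least \(1-\delta\), every estimate is within \(\epsilon\) of its true value simultaneously. The key point is that uniformity is needed because \((\widehat a,\widehat\mu)\) is chosen using the data. This is the only delicate step, and the union bound handles it.

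\emph{Step 2: deterministic chain on \(\mathcal E\).} Let \((a^\dagger,\mu^\dagger)\) attain \(\max_{a,\mu}Q_\mu(s,a)\); it exists because the set is finite. By construction, \((\widehat a,\widehat\mu)\) maximizes \(\widehat Q_\mu(s,a)\) jointly over all pairs: \(\widehat a\) maximizes \(\max_\mu\widehat Q_\mu(s,a)\), and \(\widehat\mu\) attains that inner max. Then
\[
Q_{\widehat\mu}(s,\widehat a)\ge \widehat Q_{\widehat\mu}(s,\widehat a)-\epsilon\ge \widehat Q_{\mu^\dagger}(s,a^\dagger)-\epsilon\ge Q_{\mu^\dagger}(s,a^\dagger)-2\epsilon,
\]
which is the first claim.

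\emph{Step 3: consequence.} Under Assumption~\ref{assump:base_actions}, each \(\mu(s)\in\widetilde{\mathcal A}(s)\), and \(V^\mu(s)=Q_\mu(s,\mu(s))\) by the fixed-policy Bellman identity used in the proof of Theorem~\ref{thm:statewise_safe_improvement}. An infeasible base action gives value zero on both sides, so the identity still holds. Hence \(\max_{a,\mu}Q_\mu(s,a)\ge\max_{\mu}V^\mu(s)\), and substituting this into the first claim gives the second inequality on the same event \(\mathcal E\).
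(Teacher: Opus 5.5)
Your proposal is correct and follows the paper's proof essentially step for step. Both arguments apply Hoeffding to each pair, take a union bound over the $L(s)$ action--continuation pairs, and run the same three-inequality chain through $\widehat Q_{\mu^\star}(s,a^\star)$. Both then obtain the portfolio comparison from Theorem~\ref{thm:statewise_safe_improvement}, whose argument $V^\mu(s)=Q_\mu(s,\mu(s))$ with $\mu(s)\in\widetilde{\mathcal A}(s)$ you simply reproduce inline.
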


\begin{proof}
For any fixed pair \((a,\mu)\), the variables \(Y_i(a,\mu)\) are independent Bernoulli random variables with mean \(Q_\mu(s,a)\).
By Hoeffding's inequality,
\[
    \Pr\left(
        \left|
            \widehat Q_\mu(s,a)-Q_\mu(s,a)
        \right|
        >
        \epsilon
    \right)
    \le
    2\exp(-2N\epsilon^2).
\]
Applying a union bound over all \(L(s)\) pairs, with probability at least \(1-\delta\), all estimates satisfy
\[
    \left|
        \widehat Q_\mu(s,a)-Q_\mu(s,a)
    \right|
    \le
    \epsilon(s,\delta).
\]
Let
\[
    (a^\star,\mu^\star)
    \in
    \arg\max_{a\in\widetilde{\mathcal A}(s),\,\mu\in\Pi_0}
    Q_\mu(s,a).
\]
On the high-probability event,
\[
\begin{aligned}
    Q_{\widehat\mu}(s,\widehat a)
    &\ge
    \widehat Q_{\widehat\mu}(s,\widehat a)
    -
    \epsilon(s,\delta) \\
    &\ge
    \widehat Q_{\mu^\star}(s,a^\star)
    -
    \epsilon(s,\delta) \\
    &\ge
    Q_{\mu^\star}(s,a^\star)
    -
    2\epsilon(s,\delta).
\end{aligned}
\]
This proves the first statement.
The second statement follows from Theorem~\ref{thm:statewise_safe_improvement}.
\end{proof}

Theorem~\ref{thm:finite_sample_statewise} gives the finite-sample counterpart of the exact safe-improvement result.
The selected action is within a standard Monte Carlo estimation error of the best evaluated action--continuation pair, and therefore within the same error of the best base policy in the portfolio.

\subsection{Approximation to the exact Bellman backup}

The finite-sample theorem is stated for the candidate action set \(\widetilde{\mathcal A}(s)\) and the portfolio continuation class \(\Pi_0\).
We now make explicit how these approximations relate to the exact Bellman backup over the full log-scale action space.

Let \(\mathcal A_{\mathcal K}(s)\) denote the full log-scale action space in which each executable subtask chooses a model \(m\in\mathcal M\) and a sampling width \(k\in\mathcal K\).
The candidate action set satisfies
\[
    \widetilde{\mathcal A}(s)\subseteq \mathcal A_{\mathcal K}(s).
\]
Define the candidate-set gap
\[
    \eta(s)
    =
    \max_{a\in\mathcal A_{\mathcal K}(s),\,\mu\in\Pi_0}
    Q_\mu(s,a)
    -
    \max_{a\in\widetilde{\mathcal A}(s),\,\mu\in\Pi_0}
    Q_\mu(s,a).
\]
By definition, \(\eta(s)\ge 0\), and \(\eta(s)=0\) whenever \(\widetilde{\mathcal A}(s)=\mathcal A_{\mathcal K}(s)\).

\begin{corollary}[Candidate-set gap]
\label{cor:candidate_gap}
Under the conditions of Theorem~\ref{thm:finite_sample_statewise}, with probability at least \(1-\delta\),
\[
    Q_{\widehat\mu}(s,\widehat a)
    \ge
    \max_{a\in\mathcal A_{\mathcal K}(s),\,\mu\in\Pi_0}
    Q_\mu(s,a)
    -
    \eta(s)
    -
    2\epsilon(s,\delta).
\]
\end{corollary}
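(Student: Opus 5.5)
The plan is to combine the first statement of Theorem~\ref{thm:finite_sample_statewise} with the definition of the candidate-set gap \(\eta(s)\); essentially no new probabilistic argument is needed. The only care required is to make sure the high-probability event is the same one used in Theorem~\ref{thm:finite_sample_statewise}, so that the confidence level \(1-\delta\) is inherited without an additional union bound.

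First, I will invoke Theorem~\ref{thm:finite_sample_statewise} at the fixed state \(s\) and confidence level \(\delta\). This gives an event of probability at least \(1-\delta\) on which
\[
    Q_{\widehat\mu}(s,\widehat a)\ge \max_{a\in\widetilde{\mathcal A}(s),\,\mu\in\Pi_0}Q_\mu(s,a)-2\epsilon(s,\delta).
\]
This event is the uniform Hoeffding event over the \(L(s)\) evaluated pairs. Importantly, it concerns only the candidate set \(\widetilde{\mathcal A}(s)\), and the value \(\epsilon(s,\delta)\) is unchanged.

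Second, I will rearrange the definition of \(\eta(s)\), which is a deterministic quantity. This gives the identity
\[
    \max_{a\in\widetilde{\mathcal A}(s),\,\mu\in\Pi_0}Q_\mu(s,a)=\max_{a\in\mathcal A_{\mathcal K}(s),\,\mu\in\Pi_0}Q_\mu(s,a)-\eta(s).
\]
Substituting this into the inequality on the same event yields the claim with the same probability \(1-\delta\).

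I do not expect a real obstacle. The one point worth checking is that both maxima are well defined: \(\widetilde{\mathcal A}(s)\), \(\mathcal A_{\mathcal K}(s)\) and \(\Pi_0\) are all finite and nonempty, since \(\mathcal M\) and \(\mathcal K\) are finite and Assumption~\ref{assump:base_actions} ensures \(\widetilde{\mathcal A}(s)\) is nonempty. Hence \(\eta(s)\) is a finite nonnegative number, and no extra randomness enters through it.
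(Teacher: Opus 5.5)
Your proposal is correct and matches the paper's proof: the paper also simply substitutes the definition of \(\eta(s)\) into the first statement of Theorem~\ref{thm:finite_sample_statewise}, on the same high-probability event. Your observation that \(\eta(s)\) is deterministic, so the confidence level \(1-\delta\) carries over without any further union bound, is exactly why that substitution is valid.
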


\begin{proof}
The result follows by substituting the definition of \(\eta(s)\) into Theorem~\ref{thm:finite_sample_statewise}.
\end{proof}

The gap \(\eta(s)\) measures the loss from not evaluating the full log-scale action space.
This is still a gap within the portfolio-rollout objective, because the continuation value is restricted to \(\Pi_0\).
We next isolate the additional gap between portfolio continuation and the exact Bellman continuation.

Let \(V^\star_{\mathcal K}(s)\) denote the optimal constrained completion probability over the full log-scale action space \(\mathcal A_{\mathcal K}\).
For any first action \(a\in\mathcal A_{\mathcal K}(s)\), define the exact Bellman action value
\[
    Q^\star_{\mathcal K}(s,a)
    =
    \mathbb E_{s'\sim P(\cdot\mid s,a)}
    \left[
        V^\star_{\mathcal K}(s')
    \right].
\]
The best log-scale Bellman action value is
\[
    Q^\star_{\mathcal K}(s)
    =
    \max_{a\in\mathcal A_{\mathcal K}(s)}
    Q^\star_{\mathcal K}(s,a).
\]
Because \(V^\star_{\mathcal K}(s')\ge V^\mu(s')\) for every \(\mu\in\Pi_0\), we have
\[
    Q^\star_{\mathcal K}(s,a)
    \ge
    Q_\mu(s,a)
    \qquad
    \forall a,\mu.
\]
Define the portfolio continuation gap
\[
    \zeta(s)
    =
    Q^\star_{\mathcal K}(s)
    -
    \max_{a\in\mathcal A_{\mathcal K}(s),\,\mu\in\Pi_0}
    Q_\mu(s,a).
\]
This term measures the loss from using the base-policy portfolio as the continuation value class instead of the exact optimal continuation value.
It is zero when the portfolio contains an optimal continuation policy for the best log-scale action.

\begin{corollary}[Approximation to the log-scale Bellman backup]
\label{cor:bellman_gap}
Under the conditions of Theorem~\ref{thm:finite_sample_statewise}, with probability at least \(1-\delta\),
\[
    Q^\star_{\mathcal K}(s,\widehat a)
    \ge
    Q^\star_{\mathcal K}(s)
    -
    \zeta(s)
    -
    \eta(s)
    -
    2\epsilon(s,\delta).
\]
\end{corollary}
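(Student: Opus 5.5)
The plan is to chain three inequalities, each already available or immediate from a definition. First, note that for any first action \(a\) and any \(\mu\in\Pi_0\) we have \(Q^\star_{\mathcal K}(s,a)\ge Q_\mu(s,a)\). This holds because \(V^\star_{\mathcal K}(s')\ge V^\mu(s')\) pointwise in \(s'\), as stated just before the definition of \(\zeta(s)\), and taking expectation under \(P(\cdot\mid s,a)\) preserves the inequality. Applying this with \(a=\widehat a\) and \(\mu=\widehat\mu\) gives the lower bound
\[
    Q^\star_{\mathcal K}(s,\widehat a)\ge Q_{\widehat\mu}(s,\widehat a).
\]
This step is deterministic and holds on every realization of the Monte Carlo samples.

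Second, on the high-probability event of Theorem~\ref{thm:finite_sample_statewise}, which has probability at least \(1-\delta\), invoke Corollary~\ref{cor:candidate_gap}. It gives
\[
    Q_{\widehat\mu}(s,\widehat a)\ge \max_{a\in\mathcal A_{\mathcal K}(s),\,\mu\in\Pi_0}Q_\mu(s,a)-\eta(s)-2\epsilon(s,\delta).
\]

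Third, rearrange the definition of \(\zeta(s)\) to obtain
\[
    \max_{a\in\mathcal A_{\mathcal K}(s),\,\mu\in\Pi_0}Q_\mu(s,a)=Q^\star_{\mathcal K}(s)-\zeta(s).
\]
Substituting this into the second step and combining with the first yields
\[
    Q^\star_{\mathcal K}(s,\widehat a)\ge Q^\star_{\mathcal K}(s)-\zeta(s)-\eta(s)-2\epsilon(s,\delta)
\]
on the same event, which is the claim.

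The only point needing care is the pointwise domination \(V^\star_{\mathcal K}\ge V^\mu\) in the first step. It holds because each \(\pi_{m,k}\) is a feasible policy within the log-scale action space \(\mathcal A_{\mathcal K}\), so \(V^\star_{\mathcal K}\), the optimum over all such policies, dominates it. The convention that infeasible actions have value zero is shared by both quantities, so it does not affect the comparison.

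Everything else is bookkeeping, and no new union bound is needed: the only probabilistic event used is the one already controlled in Theorem~\ref{thm:finite_sample_statewise}.
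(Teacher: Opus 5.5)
Your proposal is correct and follows the paper's proof step for step. The paper likewise gets $Q^\star_{\mathcal K}(s,\widehat a)\ge Q_{\widehat\mu}(s,\widehat a)$ from the pointwise domination $V^\star_{\mathcal K}\ge V^{\widehat\mu}$, applies Corollary~\ref{cor:candidate_gap}, and substitutes the definition of $\zeta(s)$; your remarks that the first step holds deterministically (so no new union bound is needed) and on why each $\pi_{m,k}$ is dominated spell out points the paper leaves implicit.
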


\begin{proof}
Since \(V^\star_{\mathcal K}(s')\ge V^{\widehat\mu}(s')\) for every next state \(s'\),
\[
    Q^\star_{\mathcal K}(s,\widehat a)
    \ge
    Q_{\widehat\mu}(s,\widehat a).
\]
By Corollary~\ref{cor:candidate_gap},
\[
    Q_{\widehat\mu}(s,\widehat a)
    \ge
    \max_{a\in\mathcal A_{\mathcal K}(s),\,\mu\in\Pi_0}
    Q_\mu(s,a)
    -
    \eta(s)
    -
    2\epsilon(s,\delta).
\]
By the definition of \(\zeta(s)\),
\[
    \max_{a\in\mathcal A_{\mathcal K}(s),\,\mu\in\Pi_0}
    Q_\mu(s,a)
    =
    Q^\star_{\mathcal K}(s)
    -
    \zeta(s).
\]
Combining the inequalities proves the claim.
\end{proof}

Corollary~\ref{cor:bellman_gap} decomposes the one-step gap to the exact log-scale Bellman backup into three terms:
the portfolio continuation gap \(\zeta(s)\), the candidate-set gap \(\eta(s)\), and the Monte Carlo estimation error \(2\epsilon(s,\delta)\).
This decomposition clarifies that the planner is not claimed to be globally optimal.
Its guarantee is strongest when the base portfolio provides good continuation policies, the candidate actions cover useful allocations, and the Monte Carlo budget is sufficiently large.

\subsection{Effect of imperfect success and length estimates}

The analysis so far assumes that rollout simulations use the true success, cost, and latency statistics.
In practice, subtask success rates and generation lengths are estimated from historical executions, offline calibration, or learned predictors.
We now state a robustness bound that separates Monte Carlo estimation error from estimate-induced value error.

Let \(P\) denote the true transition kernel and let \(\widetilde P\) denote the transition kernel induced by the estimated success and length statistics used in simulation.
For any candidate pair \((a,\mu)\), let \(Q_\mu^P(s,a)\) and \(Q_\mu^{\widetilde P}(s,a)\) denote the corresponding rollout values.
Assume that the estimate-induced value discrepancy is uniformly bounded:
\[
    \left|
        Q_\mu^{\widetilde P}(s,a)
        -
        Q_\mu^P(s,a)
    \right|
    \le
    \beta(s),
    \qquad
    \forall a\in\widetilde{\mathcal A}(s),\ \mu\in\Pi_0.
\]
The quantity \(\beta(s)\) captures the value error caused by imperfect estimates of success rates, generation lengths, costs, or latencies.

\begin{theorem}[Finite-sample guarantee under estimation error]
\label{thm:estimate_error}
Suppose Monte Carlo simulations are generated using the estimated transition kernel \(\widetilde P\), and the estimate-induced value discrepancy is bounded by \(\beta(s)\).
With probability at least \(1-\delta\),
\[
    Q_{\widehat\mu}^{P}(s,\widehat a)
    \ge
    \max_{a\in\widetilde{\mathcal A}(s),\,\mu\in\Pi_0}
    Q_\mu^{P}(s,a)
    -
    2\epsilon(s,\delta)
    -
    2\beta(s).
\]
Consequently, under Assumption~\ref{assump:base_actions},
\[
    Q_{\widehat\mu}^{P}(s,\widehat a)
    \ge
    \max_{\mu\in\Pi_0}V_P^\mu(s)
    -
    2\epsilon(s,\delta)
    -
    2\beta(s).
\]
\end{theorem}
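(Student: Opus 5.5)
The argument follows the proof of Theorem~\ref{thm:finite_sample_statewise}, with one extra triangle-inequality step to move between the simulated kernel \(\widetilde P\) and the true kernel \(P\). The simulations are now drawn from \(\widetilde P\). For each fixed pair \((a,\mu)\) with \(a\in\widetilde{\mathcal A}(s)\) and \(\mu\in\Pi_0\), the indicators \(Y_i(a,\mu)\) are therefore i.i.d.\ Bernoulli with mean \(Q^{\widetilde P}_\mu(s,a)\), not \(Q^P_\mu(s,a)\). We apply Hoeffding's inequality to each pair and take a union bound over the \(L(s)\) pairs, exactly as before. With probability at least \(1-\delta\), every pair satisfies
\[
    \bigl|\widehat Q_\mu(s,a)-Q^{\widetilde P}_\mu(s,a)\bigr|\le\epsilon(s,\delta).
\]
Combining this with the assumed uniform discrepancy bound \(|Q^{\widetilde P}_\mu(s,a)-Q^P_\mu(s,a)|\le\beta(s)\) gives, on the same event,
\[
    \bigl|\widehat Q_\mu(s,a)-Q^{P}_\mu(s,a)\bigr|\le\epsilon(s,\delta)+\beta(s)
\]
for all evaluated pairs.

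Next we run the same three-line chain as in Theorem~\ref{thm:finite_sample_statewise}, now with the enlarged error \(\epsilon+\beta\). Let \((a^\star,\mu^\star)\) maximize \(Q^P_\mu(s,a)\) over \(\widetilde{\mathcal A}(s)\times\Pi_0\). By the previous step, \(Q^P_{\widehat\mu}(s,\widehat a)\ge\widehat Q_{\widehat\mu}(s,\widehat a)-\epsilon-\beta\). Since \((\widehat a,\widehat\mu)\) maximizes the empirical score jointly over actions and continuations, \(\widehat Q_{\widehat\mu}(s,\widehat a)\ge\widehat Q_{\mu^\star}(s,a^\star)\). Applying the previous step once more, \(\widehat Q_{\mu^\star}(s,a^\star)\ge Q^P_{\mu^\star}(s,a^\star)-\epsilon-\beta\). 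Chaining the three inequalities yields the first claim with total loss \(2\epsilon(s,\delta)+2\beta(s)\).

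The second claim follows from Theorem~\ref{thm:statewise_safe_improvement} applied under the true kernel \(P\). That theorem uses only Assumption~\ref{assump:base_actions} and the Bellman identity \(V^\mu_P(s)=Q^P_\mu(s,\mu(s))\), and neither depends on which kernel generates the simulations. Hence \(\max_{a,\mu}Q^P_\mu(s,a)\ge\max_{\mu}V^\mu_P(s)\). The one point to handle is base actions that violate the constraints. Their value is defined to be zero in both \(P\) and \(\widetilde P\), so the discrepancy bound holds trivially for them. If feasibility itself depends on estimated costs or latencies, we interpret it through \(\beta(s)\), which is assumed to cover all candidate pairs.

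The main subtlety is the first step. The Monte Carlo average concentrates around the \(\widetilde P\)-value, not the \(P\)-value, so the two error sources must be separated cleanly. This also explains why \(\beta\) enters twice: once when converting the estimate for \((\widehat a,\widehat\mu)\), and once for \((a^\star,\mu^\star)\).
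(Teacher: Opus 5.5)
Your proposal is correct and is essentially the paper's argument. The paper applies Theorem~\ref{thm:finite_sample_statewise} as a black box under \(\widetilde P\) and then pays \(\beta(s)\) once at each end of the chain, while you first fold \(\beta(s)\) into a uniform deviation bound \(\epsilon(s,\delta)+\beta(s)\) between \(\widehat Q_\mu\) and \(Q^P_\mu\) and then run the same three-step chain, which is only a reordering of the same triangle inequalities; the second claim then follows, as in the paper, from Theorem~\ref{thm:statewise_safe_improvement} under the true kernel \(P\).
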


\begin{proof}
The Monte Carlo estimates concentrate around \(Q_\mu^{\widetilde P}(s,a)\).
By Theorem~\ref{thm:finite_sample_statewise}, applied under the simulated transition kernel \(\widetilde P\), with probability at least \(1-\delta\),
\[
    Q_{\widehat\mu}^{\widetilde P}(s,\widehat a)
    \ge
    \max_{a,\mu}Q_\mu^{\widetilde P}(s,a)
    -
    2\epsilon(s,\delta).
\]
Using the estimate-error bound,
\[
\begin{aligned}
    Q_{\widehat\mu}^{P}(s,\widehat a)
    &\ge
    Q_{\widehat\mu}^{\widetilde P}(s,\widehat a)
    -
    \beta(s) \\
    &\ge
    \max_{a,\mu}Q_\mu^{\widetilde P}(s,a)
    -
    2\epsilon(s,\delta)
    -
    \beta(s) \\
    &\ge
    \max_{a,\mu}Q_\mu^{P}(s,a)
    -
    2\epsilon(s,\delta)
    -
    2\beta(s).
\end{aligned}
\]
The portfolio comparison follows from Theorem~\ref{thm:statewise_safe_improvement}.
\end{proof}

Theorem~\ref{thm:estimate_error} explains the role of the noise-injection experiments.
When success rates or generation lengths are estimated imperfectly, the planner optimizes the simulated workflow model induced by these estimates.
The degradation is controlled by the induced value discrepancy \(\beta(s)\).
Empirically, we perturb predicted subtask success probabilities and generation lengths to test how sensitive the planner is to such estimation errors.

\paragraph{A sufficient condition for success-probability errors.}
A concrete sufficient condition for small \(\beta(s)\) can be stated for errors in subtask completion probabilities, assuming the cost and latency estimates are fixed.
Errors in generation length, cost, or latency can also affect feasibility boundaries.
These effects are captured by the abstract value discrepancy \(\beta(s)\) above, or can be analyzed under additional discretization or regularity assumptions on the value function.

Suppose only the subtask completion probabilities are perturbed.
If, over the remaining execution horizon \(H(s)\),
\[
    \sup_{s,a}
    \mathrm{TV}
    \left(
        P(\cdot\mid s,a),
        \widetilde P(\cdot\mid s,a)
    \right)
    \le
    \alpha,
\]
where \(\mathrm{TV}\) denotes total variation distance, then a standard simulation argument gives
\[
    \left|
        V_P^\pi(s)-V_{\widetilde P}^\pi(s)
    \right|
    \le
    \min\{1,H(s)\alpha\}
\]
for any fixed policy \(\pi\).
Thus, in Theorem~\ref{thm:estimate_error}, one may take
\[
    \beta(s)\le \min\{1,H(s)\alpha\}.
\]

For the Bernoulli completion model, suppose each attempted subtask success probability is perturbed by at most \(\rho\) after accounting for the selected model and sampling width:
\[
    \left|
        q_{v,m}(k)-\widetilde q_{v,m}(k)
    \right|
    \le
    \rho.
\]
For an action attempting \(r\) executable subtasks, a coupling argument yields
\[
    \mathrm{TV}
    \left(
        P(\cdot\mid s,a),
        \widetilde P(\cdot\mid s,a)
    \right)
    \le
    \min\{1,r\rho\}.
\]
This bound is conservative, but it shows explicitly how errors in predicted subtask success probabilities affect rollout evaluation.
Errors in generation-length estimates are handled by the more general value-discrepancy term \(\beta(s)\).

\subsection{Estimating the closed-loop completion probability}
\label{app:closed_loop_prediction}

The same Monte Carlo mechanism can be used to estimate the success probability of the full closed-loop planner.
Given a workflow, a budget \(B\), and a deadline \(D\), run \(N_{\mathrm{eval}}\) simulated executions of the complete online policy from
\[
    s_0=(\emptyset,B,D).
\]
Let
\[
    Z_i
    =
    \mathbf 1[
        \text{closed-loop simulation } i
        \text{ completes within } B,D
    ].
\]
The predicted constrained completion probability is
\[
    \widehat P_{\mathrm{succ}}
    =
    \frac{1}{N_{\mathrm{eval}}}
    \sum_{i=1}^{N_{\mathrm{eval}}}
    Z_i.
\]
Since \(Z_i\) are Bernoulli variables, Hoeffding's inequality gives
\[
    \Pr\left(
        \left|
            \widehat P_{\mathrm{succ}}
            -
            P_{\mathrm{succ}}
        \right|
        >
        \epsilon
    \right)
    \le
    2\exp(-2N_{\mathrm{eval}}\epsilon^2).
\]
Equivalently, with probability at least \(1-\delta\),
\[
    \left|
        \widehat P_{\mathrm{succ}}
        -
        P_{\mathrm{succ}}
    \right|
    \le
    \sqrt{
        \frac{\log(2/\delta)}
        {2N_{\mathrm{eval}}}
    }.
\]
This prediction mode estimates the probability that the full online planner completes the workflow within the specified budget and deadline.
When simulations use estimated success rates and generation lengths, this is an estimate under the induced simulated execution model.
Calibration against real executions is evaluated empirically.
It is distinct from the action-selection guarantees above, which compare a single planning decision or the exact closed-loop planner against the base-policy portfolio.

\section{Implementation Details}
\label{app:implementation_details}
\subsection{Offline Workflow Pool Construction}
\label{sec:pool build}
We construct empirical execution profiles through large-scale offline rollouts.
For each model--subtask pair, the rollout pool stores $512$ samples with success outcomes, wall-clock latency, and output-token length.
For each benchmark, we build the evaluation pool from the union of usable workflows obtained from the evaluated model rollouts, resulting in $103$ usable workflows for ProofFlow and $462$ usable workflows for CodeFlow.
All experiments for collecting offline model rollouts are executed in parallel on a cluster with 512 NVIDIA H800 GPUs, where each individual run is allocated to 8 GPUs.

For ProofFlow, offline rollout collection requires both formalization and proof generation.
On each $8$-H800 machine, we split the GPUs into two groups of four GPUs: one group loads the formalizer model and the other group loads the prover model.
A workflow rollout first invokes the formalizer to translate the input problem into a Lean statement, and then invokes the prover to generate the corresponding proof.
Lean~4.15.0 is utilized as the verifier to check whether the generated proof is accepted.

 To ensure reproducibility and fair comparison, we adhered to the sampling hyperparameters for each model family:
\begin{itemize}
    \item Qwen3-Thinking-Series: Temperature $T=0.6$, top-$p=0.95$, top-$k=20$, with a maximum generation length of 38k tokens.
    \item Qwen3-Instruct-Series: Temperature $T=0.7$, top-$p=0.8$, top-$k=20$, with a maximum generation length of 38k tokens.
    \item Goedel-V2-Series: Temperature $T=0.6$, top-$p=0.95$, top-$k=20$, with a maximum generation length of 16k tokens.
    \item Kimina-7B-Series: Temperature $T=0.6$, top-$p=0.95$, top-$k=0$, with a maximum generation length of 8k tokens.
\end{itemize}

\subsection{Online Simulation}
\label{sec:Onlinesim}
All simulator evaluations of the proposed MCPP method are CPU-only and are conducted on machines equipped with AMD EPYC 9T24 96-Core processors with 96 CPU cores.
The inner Monte Carlo rollout evaluator is implemented with a Numba-accelerated kernel to reduce Python overhead. 
For ProofFlow, we choose three model pairs as our candidate model set:
\textsc{Goedel-Formalizer-V2-32B} with \textsc{Goedel-Prover-V2-32B},
\textsc{Goedel-Formalizer-V2-8B} with \textsc{Goedel-Prover-V2-8B} \citep{lin2025goedelproverv2},
and \textsc{Kimina-Autoformalizer-7B} with \textsc{Kimina-Prover-Preview-Distill-7B} \citep{kimina_prover_2025}.
For CodeFlow, we choose four models as our candidate model set:
\textsc{Qwen3-30B-A3B-Instruct-2507},
\textsc{Qwen3-30B-A3B-Thinking-2507},
\textsc{Qwen3-4B-Instruct-2507}, and
\textsc{Qwen3-4B-Thinking-2507} \citep{qwen3technicalreport}.
Moreover, we employ $\mathcal{K} = \{1,4,16,64\} $ as our candidate rollout-width portfolio.
We calculate budget consumption according to official output-token prices\footnote{\url{https://www.alibabacloud.com/help/en/model-studio/model-pricing}}.
% The simulator uses a fixed internal API price table, shown in Table~\ref{tab:api_prices}, to convert output tokens into monetary cost.
% All methods are evaluated with the same price table, so budget comparisons are controlled across policies.

\subsection{Noise Perturbation Details}
\label{app:robustness_details}

In the robustness experiments, we perturb only planner-visible empirical profiles while keeping the true simulator execution distribution unchanged.
This setting evaluates robustness to profile-estimation mismatch: the planner makes decisions using noisy rollout pools, whereas execution outcomes are still sampled from the original empirical pools.

For token-length noise, let \(c_i\) denote the token length of the \(i\)-th sample in the empirical pool of node \(v\) and model \(m\).
Let
\[
    c_{\max}^{(v,m)} = \max_i c_i
\]
be the maximum value in that node--model pool.
We normalize each sample by \(c_{\max}^{(v,m)}\), add Gaussian noise in the normalized space, and rescale it back:
\[
    \tilde{c}_{i}
    =
    c_{\max}^{(v,m)}
    \cdot
    \mathrm{clip}
    \left(
        \frac{c_i}{c_{\max}^{(v,m)}} + \sigma z_i,\,
        \epsilon,\,
        1-\epsilon
    \right),
    \qquad
    z_i\sim\mathcal{N}(0,1).
\]
Here, \(\tilde{c}_i\) is the perturbed planner-visible token-length sample, \(\sigma\) controls the noise level, \(z_i\) is an independent standard Gaussian perturbation for sample \(i\), and \(\epsilon\) is a small numerical constant used to avoid degenerate zero values.

For success-probability noise, let \(p_{v,m}\) be the empirical success rate of node \(v\) under model \(m\).
We perturb this rate directly in probability space:
\[
    \tilde{p}_{v,m}
    =
    \mathrm{clip}
    \left(
        p_{v,m} + \sigma z,\,
        \epsilon,\,
        1-\epsilon
    \right),
    \qquad
    z\sim\mathcal{N}(0,1),
\]
where \(\tilde{p}_{v,m}\) is the perturbed planner-visible success probability and \(z\) is a standard Gaussian perturbation sampled for the corresponding node--model pool.
Given a pool of \(n_{v,m}\) empirical samples, we then minimally flip success labels so that the number of successful samples matches approximately
\[
    \mathrm{round}\!\left(\tilde{p}_{v,m} n_{v,m}\right).
\]
Thus, the planner observes a noisy success distribution, while the simulator still executes using the original empirical outcomes.
Under this formulation, \(\sigma=0.3\) is already a strong success-probability perturbation.
For instance, for a node--model pair with \(p_{v,m}=0.5\), one standard deviation corresponds to an approximate range of \(0.2\) to \(0.8\) before clipping.

\section{Additional Analysis}
\subsection{Scaling with Workflow Size}
\label{sec:node_range}
To assess whether our method remains effective on larger workflows, we group ProofFlow workflows by the number of subtasks and report the success rates for different sizes in Figure~\ref{fig:node_range}.
Figure~\ref{fig:node_range} shows that the performance gaps between MCPP and baselines increase with the number of subtasks, especially under tight constraints. 
Since larger workflows amplify the impact of early allocation errors, static Uniform cannot condition its initial allocation on realized intermediate outcomes, and on-policy Retry fails to adapt to heterogeneous subtask difficulty, making both baselines less effective at completing larger workflows under tight deadlines.
In contrast, by updating its state after each execution event, MCPP remains responsive to the realized trajectory, maintaining a clear advantage across all workflow sizes and yielding its largest gains on the most constrained instances.

\begin{figure}[t]
    \centering
    \includegraphics[width=0.9\textwidth]{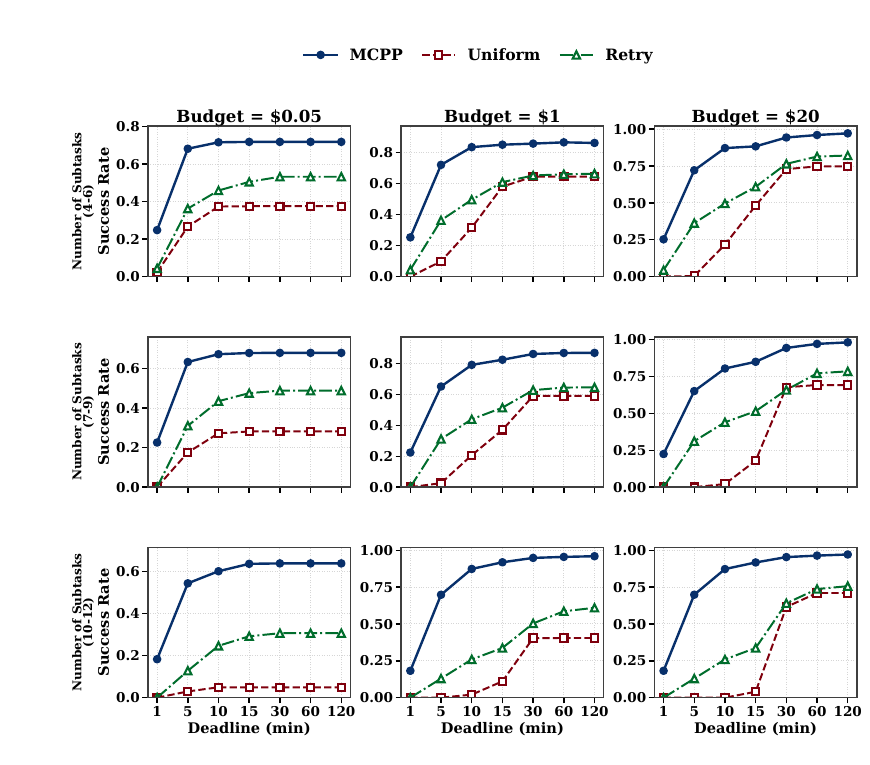}
    \caption{
    \textbf{Scaling with the size of workflows.}
    Monte Carlo Portfolio Rollout Search consistently outperforms baselines across workflow sizes, with particularly clear gains on larger and tighter instances.
    }
    \label{fig:node_range}
\end{figure}

\subsection{Runtime Analysis}
\label{sec:runtime_analysis}

To quantify the overhead of online planning, we report the mean planner wall-clock time under the default $M=64$ configuration in Table~\ref{tab:runtime_trend}.
Planning overhead remains moderate across all budget--deadline settings.
Under tight constraints, the mean planner time is around $0.6$ seconds, since only a small set of actions is feasible and simulated continuations terminate quickly.
As the budget and deadline are relaxed, more actions become feasible and Monte Carlo continuations can explore deeper into the remaining workflow, leading to a predictable increase in runtime.
However, even in the loosest setting, the mean planner time remains only a few seconds, indicating that the default implementation is practical for online receding-horizon execution.

\begin{table*}[t]
    \centering
    \footnotesize
    \renewcommand{\arraystretch}{0.92}
    \setlength{\tabcolsep}{4.8pt}
    \caption{
    Planner wall-clock time across different budget–deadline settings
    }
    \label{tab:runtime_trend}
    \begin{tabular}{lccccccc}
        \toprule
        \multirow{2}{*}{\textbf{Budget ($B$)}} &
        \multicolumn{7}{c}{\textbf{ Deadline ($D$)}} \\
        \cmidrule(lr){2-8}
        & $1$ min & $5$ min & $10$ min & $15$ min & $30$ min & $60$ min & $120$ min \\
        \midrule
        $\$0.05$ & 0.64 & 0.82 & 0.91 & 0.97 & 1.06 & 1.07 & 1.11 \\
        $\$1$    & 0.60 & 0.88 & 1.07 & 1.30 & 1.79 & 2.38 & 3.09 \\
        $\$20$   & 0.60 & 0.89 & 1.09 & 1.36 & 1.96 & 2.85 & 4.30 \\
        \bottomrule
    \end{tabular}
\end{table*}

\subsection{Analysis of Monte Carlo Budgets}
\label{sec:m_sweep}
Table~\ref{tab:m_sweep_values} demonstrates constrained completion probability and mean planner wall-clock time under different Monte Carlo budgets.
Performance improves from small \(M\) to moderate \(M\), but quickly saturates, while planning time continues to increase substantially.
For example, under \(B=\$20\) and \(D=120\) minutes, increasing \(M\) from \(64\) to \(256\) improves success rate only from \(96.4\%\) to \(96.9\%\), but increases mean planner time from \(17.58\) seconds to \(66.99\) seconds.
Thus, \(M=64\) provides a strong quality--runtime trade-off across budget--deadline settings.

This saturation suggests that MCPP benefits primarily from online lookahead over downstream constrained completion, rather than sampling alone.
Once enough samples are available to distinguish high-value actions, additional simulations mostly reduce estimator variance and provide limited gains.
Therefore, although larger \(M\) can theoretically reduce the approximation gap as discussed in Appendix~\ref{app: finite}, moderate Monte Carlo budgets are sufficient in practice.
We use \(M=64\) as the default setting in the main experiments.

\begin{table*}[t]
    \small
    \centering
    \renewcommand{\arraystretch}{0.92}
    \setlength{\tabcolsep}{2.2pt}
    \caption{
    Success rate and planner wall-clock time under different Monte Carlo budgets.
    Each cell reports constrained completion probability in percentage, with mean planner wall-clock time in seconds shown in parentheses.
    }
    \label{tab:m_sweep_values}
    \resizebox{\textwidth}{!}{
    \begin{tabular}{llccccccc}
        \toprule
        \textbf{Budget} & \textbf{MC samples}
        & $D=1$ min & $D=5$ min & $D=10$ min & $D=15$ min
        & $D=30$ min & $D=60$ min & $D=120$ min \\
        \midrule
        \multirow{5}{*}{$B=\$0.05$}
        & $M=16$  & 19.9 (0.92) & 55.5 (1.03) & 61.9 (1.11) & 63.2 (1.16) & 63.4 (1.09) & 63.4 (1.16) & 63.4 (1.07) \\
        & $M=32$  & 20.2 (1.44) & 57.4 (1.75) & 62.4 (1.69) & 64.0 (1.60) & 64.1 (1.75) & 64.1 (1.77) & 64.1 (1.77) \\
        & $M=64$  & 20.5 (2.40) & 59.0 (2.84) & 63.3 (2.94) & 64.5 (2.88) & 64.6 (3.01) & 64.6 (2.82) & 64.6 (3.02) \\
        & $M=128$ & 20.3 (4.30) & 59.8 (5.14) & 64.0 (5.28) & 65.3 (5.50) & 65.4 (5.37) & 65.4 (5.35) & 65.4 (5.24) \\
        & $M=256$ & 20.4 (8.18) & 60.3 (9.72) & 64.4 (10.11) & 65.6 (10.16) & 65.8 (10.28) & 65.8 (10.13) & 65.8 (10.18) \\
        \midrule
        \multirow{5}{*}{$B=\$1$}
        & $M=16$  & 20.0 (1.24) & 60.8 (2.33) & 75.6 (2.76) & 79.9 (2.92) & 84.5 (3.07) & 85.6 (3.21) & 86.3 (3.21) \\
        & $M=32$  & 20.1 (2.21) & 64.0 (3.85) & 78.5 (5.02) & 81.6 (5.36) & 84.9 (5.63) & 86.1 (5.72) & 86.5 (5.81) \\
        & $M=64$  & 20.6 (3.85) & 65.8 (7.29) & 79.7 (9.35) & 82.8 (9.88) & 85.7 (10.54) & 86.6 (10.95) & 86.6 (11.10) \\
        & $M=128$ & 20.4 (7.24) & 67.2 (14.07) & 80.7 (17.90) & 83.4 (19.35) & 86.1 (20.18) & 86.7 (20.81) & 86.7 (21.49) \\
        & $M=256$ & 20.5 (14.19) & 67.6 (27.65) & 81.0 (36.63) & 83.5 (38.10) & 86.1 (39.72) & 86.7 (42.68) & 86.8 (43.11) \\
        \midrule
        \multirow{5}{*}{$B=\$20$}
        & $M=16$  & 20.0 (1.35) & 60.9 (2.35) & 77.2 (3.62) & 81.8 (3.84) & 89.9 (4.20) & 92.8 (4.54) & 94.7 (4.83) \\
        & $M=32$  & 20.1 (2.16) & 64.0 (4.45) & 79.9 (6.57) & 83.5 (7.01) & 91.4 (7.96) & 93.9 (8.46) & 95.9 (9.07) \\
        & $M=64$  & 20.6 (4.04) & 65.8 (8.05) & 81.0 (12.48) & 84.7 (13.79) & 92.5 (15.84) & 95.0 (16.40) & 96.4 (17.58) \\
        & $M=128$ & 20.4 (7.41) & 67.2 (15.99) & 82.0 (24.50) & 85.6 (27.04) & 92.8 (30.13) & 95.9 (32.35) & 96.8 (34.50) \\
        & $M=256$ & 20.5 (14.38) & 67.6 (29.99) & 82.4 (49.16) & 86.1 (51.57) & 93.2 (60.87) & 96.2 (64.39) & 96.9 (66.99) \\
        \bottomrule
    \end{tabular}
    }
\end{table*}

\section{Prompts}
In this section, we present the full prompt templates used in our experiments. 
\begin{figure*}[ht]
    \centering
    \begin{tcolorbox}[
        colback=gray!5,
        colframe=black,
        boxrule=0.8pt,
        arc=2pt,
        left=10pt, right=10pt, top=10pt, bottom=10pt,
        title=\textbf{CodeFlow Multi-Turn Input Serialization Template}
    ]
        \ttfamily
        \scriptsize

        System Message: \\
        You are a Programming Expert. You always provide correct and reliable code solutions.

        \vspace{1em}

        User Message: \\

        You are a Programming Expert. You always provide correct and reliable code solutions. You will be provided with the Background of the whole problem, a programming problem and may also some pre-implemented functions. If pre-implemented functions provided, you need to call the pre-implemented functions and write a new function to solve the problem.

        \vspace{1em}

        Background of the whole problem: \\
        \textcolor{red}{\{\{ problem\_description \}\}}

        \vspace{1em}

        Problem Description: \\
        You need to complete \textcolor{red}{\{\{ name \}\}} function. \\
        \textcolor{red}{\{\{ statement \}\}}

        \vspace{1em}

        \textcolor{gray}{[Optional Dependency Block]} \\
        Dependency information: \\
        To solve the problem, you need to utilize the pre-implemented functions
        \textcolor{red}{\{\{ dependencies \}\}} provided.

        \vspace{0.5em}

        Pre-implemented functions: \\
        \textcolor{red}{\{\{ history \}\}}

        \vspace{1em}

        Guidelines: \\
        - Ensure the function is executable and meets the requirement. \\
        - Handle dependency information correctly when dependencies are provided. \\
        - Provide clear and concise comments to explain key parts of the code.

        \vspace{1em}

        \textcolor{gray}{[Final-Turn Prefix]} \\
        For the final subproblem, generate code beginning with: \\
        import sys \\
        def \textcolor{red}{\{\{ name \}\}}(): \\
        \hspace*{2em}input = sys.stdin.read().split()

        \vspace{1em}

        Return your response by filling the function body following the function signature provided. Just generate the function itself and do not output examples.

        \vspace{0.5em}

        ```python
    \end{tcolorbox}
    \caption{\textbf{CodeFlow multi-turn input serialization template.} The current subproblem is serialized with the whole-problem background, the target function name, the subproblem statement, and optionally dependency names and previously generated implementations. The dependency and final-turn blocks are included according to the subproblem position in the workflow.}
    \label{fig:codeflow_prompt}
\end{figure*}

\begin{figure*}[ht]
    \centering
    \begin{tcolorbox}[
        colback=gray!5,
        colframe=black,
        boxrule=0.8pt,
        arc=2pt,
        left=10pt, right=10pt, top=10pt, bottom=10pt,
        title=\textbf{ProofFlow Formalizer Input Serialization Template}
    ]
        \ttfamily
        \scriptsize
        \raggedright

        System Message:\par
        You are a thinking model specialized in turning natural-language math statements into Lean 4 code.

        \par\medskip

        User Message:\par
        Please autoformalize the following natural language problem proof step in Lean 4.

        \par\medskip

        Use the following lemma name:\par
        \textcolor{red}{\{\{ lemma\_header \}\}}

        \par\medskip

        The natural language statement is:\par
        \textcolor{red}{\{\{ item.statement \}\}}

        \par\medskip

        The dependencies are:\par
        \textcolor{red}{\{\{ dependencies \}\}}

        \par\medskip

        This is the Lean code skeleton you need to use:

        \par\smallskip
        \textcolor{gray}{--- lean4 ---}\par
        import Mathlib\par
        import Aesop\par
        \par
        set\_option maxHeartbeats 0\par
        \par
        open BigOperators Real Nat Topology Rat Filter\par
        \par
        \textcolor{red}{\{\{ lemma\_header \}\}}\par
        {[}place correct hypothesis here{]} :\par
        {[}place goal here{]} := by\par
        sorry\par
        \textcolor{gray}{--- end lean4 ---}

        \par\medskip

        \textcolor{gray}{[Optional Verified Dependency Context]}\par
        The following Lean 4 code contains verified declarations and proofs you may use:\par
        \textcolor{red}{\{\{ dependency\_context\_code \}\}}

        \par\medskip

        Important: Please write only one lemma or theorem.
    \end{tcolorbox}
    \caption{\textbf{ProofFlow formalizer input serialization template.} Each proof-graph node is converted into a Lean formalization task using the node statement, dependency identifiers, and a fixed Lean skeleton. When available, verified prior Lean declarations are appended as dependency context.}
    \label{fig:proofflow_formalizer_prompt}
\end{figure*}

\begin{figure*}[ht]
    \centering
    \begin{tcolorbox}[
        colback=gray!5,
        colframe=black,
        boxrule=0.8pt,
        arc=2pt,
        left=10pt, right=10pt, top=10pt, bottom=10pt,
        title=\textbf{ProofFlow Solver Input Serialization Template}
    ]
        \ttfamily
        \scriptsize

        System Message: \\
        You are an expert Lean 4 theorem prover. Your job is to complete partially written Lean 4 code and provide correct, verifiable proofs.

        \vspace{1em}

        User Message: \\

        This is the lemma/theorem I want you to prove: \\
        \textcolor{red}{\{\{ item.statement \}\}}

        \vspace{1em}

        Complete the following Lean 4 code. Do not remove imports:

        \vspace{0.5em}

        ```lean4 \\
        \textcolor{red}{\{\{ item.formalization["lean\_code"] \}\}} \\
        ```

        \vspace{1em}

        You can adapt previous Lean 4 lemma statements to fit the goal, especially if you encounter errors.

        \vspace{1em}

        \textcolor{gray}{[Optional Verified Dependency Context]} \\
        The following Lean 4 code contains already verified previous proof steps and declarations that this node may depend on: \\
        \textcolor{red}{\{\{ dependency\_context\_code \}\}}

        \vspace{1em}

        \textcolor{gray}{[Optional Error Notice]} \\
        The previous Lean 4 code contains errors. Please take that into account.

    \end{tcolorbox}
    \caption{\textbf{ProofFlow solver input serialization template.} After formalization, the solver receives the target proof-step statement, the generated Lean skeleton, and optionally verified dependency code. The model is instructed to replace the placeholder proof with a complete Lean 4 proof without \texttt{sorry}.}
    \label{fig:proofflow_solver_prompt}
\end{figure*}

% \clearpage
% \input{checklist.tex}

\end{document}